\def\1{\bm{1}}
\DeclareMathAlphabet{\mathsfit}{\encodingdefault}{\sfdefault}{m}{sl}
\SetMathAlphabet{\mathsfit}{bold}{\encodingdefault}{\sfdefault}{bx}{n}
\def\mcA{{\mathcal{A}}}
\def\mcB{{\mathcal{B}}}
\def\mcD{{\mathcal{D}}}
\def\mcG{{\mathcal{G}}}
\def\mcM{{\mathcal{M}}}
\def\mcN{{\mathcal{N}}}
\def\mcP{{\mathcal{P}}}
\def\mcR{{\mathcal{R}}}
\def\mcS{{\mathcal{S}}}
\def\mcX{{\mathcal{X}}}
\newcommand{\levy}{L\'evy-Prokhorov\ }
\newcommand{\lp}[1]{\pi(#1)}
\newcommand{\ours}[2]{\Pi_{#1}(#2)}
\newcommand{\oursapprox}[2]{\overline{\Pi}_{#1}(#2)}
\newcommand{\mass}[1]{\mcM(#1)}
\newcommand{\excess}{\mathrm{Exc}}
\newcommand{\ignore}[1]{}
\newcommand{\cell}{\Box}
\theoremstyle{plain}
\newtheorem{theorem}{Theorem}[section]
\newtheorem{lemma}[theorem]{Lemma}
\theoremstyle{definition}
\theoremstyle{remark}
\icmltitlerunning{A New Robust Partial \texorpdfstring{$p$}{}-Wasserstein-Based Metric for Comparing Distributions}
\begin{document}

\twocolumn[
\icmltitle{A New Robust Partial \texorpdfstring{$p$}{}-Wasserstein-Based Metric for Comparing Distributions}

% It is OKAY to include author information, even for blind
% submissions: the style file will automatically remove it for you
% unless you've provided the [accepted] option to the icml2024
% package.

% List of affiliations: The first argument should be a (short)
% identifier you will use later to specify author affiliations
% Academic affiliations should list Department, University, City, Region, Country
% Industry affiliations should list Company, City, Region, Country

% You can specify symbols, otherwise they are numbered in order.
% Ideally, you should not use this facility. Affiliations will be numbered
% in order of appearance and this is the preferred way.
\icmlsetsymbol{equal}{*}

\begin{icmlauthorlist}
\icmlauthor{Sharath Raghvendra}{ncsu,equal}
\icmlauthor{Pouyan Shirzadian}{vt}
\icmlauthor{Kaiyi Zhang}{vt}
\end{icmlauthorlist}

\icmlaffiliation{ncsu}{North Carolina State University}
\icmlaffiliation{vt}{Virginia Tech}

% \icmlcorrespondingauthor{S. Raghvendra}{skraghve@ncsu.edu}
% \icmlcorrespondingauthor{P. Shirzadian}{pshirzadian@vt.edu}
% \icmlcorrespondingauthor{K. Zhang}{kaiyiz@vt.edu}

% You may provide any keywords that you
% find helpful for describing your paper; these are used to populate
% the "keywords" metadata in the PDF but will not be shown in the document
\icmlkeywords{Machine Learning, ICML}

\vskip 0.3in
]

% this must go after the closing bracket ] following \twocolumn[ ...

% This command actually creates the footnote in the first column
% listing the affiliations and the copyright notice.
% The command takes one argument, which is text to display at the start of the footnote.
% The \icmlEqualContribution command is standard text for equal contribution.
% Remove it (just {}) if you do not need this facility.

% \printAffiliationsAndNotice{}  % leave blank if no need to mention equal contribution
\printAffiliationsAndNotice{\icmlEqualContribution} % otherwise use the standard text.

\begin{abstract}
The $2$-Wasserstein distance is sensitive to minor geometric differences between distributions, making it a very powerful dissimilarity metric. However, due to this sensitivity, a small outlier mass can also cause a significant increase in the $2$-Wasserstein distance between two similar distributions. Similarly, sampling discrepancy can cause the empirical $2$-Wasserstein distance on $n$ samples in $\mathbb{R}^2$ to converge to the true distance at a rate of $n^{-1/4}$, which is significantly slower than the rate of $n^{-1/2}$ for $1$-Wasserstein distance.
We introduce a new family of distances parameterized by $k \ge 0$, called $k$-RPW that is based on computing the partial $2$-Wasserstein distance. We show that (1) $k$-RPW satisfies the metric properties, (2) $k$-RPW is robust to small outlier mass while retaining the sensitivity of $2$-Wasserstein distance to minor geometric differences, and (3) when $k$ is a constant, $k$-RPW distance between empirical distributions on $n$ samples in $\mathbb{R}^2$ converges to the true distance at a rate of $n^{-1/3}$, which is faster than the convergence rate of $n^{-1/4}$ for the $2$-Wasserstein distance.
Using the partial $p$-Wasserstein distance, we extend our distance to any $p \in [1,\infty]$.
By setting parameters $k$ or $p$ appropriately, we can reduce our distance to the total variation, $p$-Wasserstein, and the L\'evy-Prokhorov distances. Experiments show that our distance function achieves higher accuracy in comparison to the $1$-Wasserstein, $2$-Wasserstein, and TV distances for image retrieval tasks on noisy real-world data sets.

\end{abstract}

\section{Introduction}\label{sec:introduction}
Given two probability distributions $\mu$ and $\nu$ with supports $\mcA$ and $\mcB$, let, for any $(a,b) \in \mcA\times \mcB$, $d(a,b)$ be the cost of moving a unit mass from $a$ to $b$. A \emph{transport plan} $\gamma$ is a coupling of $\mu$ and $\nu$, i.e., a joint distribution over the support $\mcA\times \mcB$ whose first and second marginals are $\mu$ and $\nu$. For $p \ge 1$, consider the case where the support of $\mu$ and $\nu$ lie in a metric space $(\mcX, c)$ with a unit diameter, i.e., $c(a,b) \le 1$ for any pair $(a,b) \in \mcX\times \mcX$ and the cost of moving unit mass from $a$ to $b$ is given by $d(a,b) = c(a,b)^p$. For any transport plan $\gamma$ between $\mu$ and $\nu$, the cost of $\gamma$ is defined as \[w_p(\gamma):=\left(\int_{\mcX\times \mcX}c(x,y)^p\,\mathrm{d}\gamma(x,y)\right)^{1/p}.\]Let $\gamma^*$ be a minimum-cost transport plan between $\mu$ and $\nu$. Then,  
the \emph{$p$-Wasserstein distance} between $\mu$ and $\nu$ is defined as 
$W_p(\mu, \nu):=w_p(\gamma^*)$.

The $p$-Wasserstein distance is a powerful metric for measuring similarities between probability distributions.  
Due to its numerous mathematical properties, the $p$-Wasserstein distance has found diverse applications including in machine learning~\cite{chang2023csot, chuang2022robust, esfahani,janati2019wasserstein, luise2018differential,oquab2023dinov2,  vincent2021semi}, computer vision~\cite{indykicml,gupta2010sparse, lai2022sar}, and natural language processing~\cite{alvarez2018gromov,huang2016supervised, yurochkin2019hierarchical}. 
One can estimate the $p$-Wasserstein distance between two unknown distributions $\mu$ and $\nu$ by simply taking $n$ samples from each $\mu$ and $\nu$ and then computing the $p$-Wasserstein distance between the discrete distributions over these samples (each sample point is assigned a mass of $1/n$). For $p \in [1,\infty)$, it is well-known that as $n\rightarrow \infty$, this \emph{empirical $p$-Wasserstein} distance converges to the true $p$-Wasserstein distance. Due to this law of weak convergence, the $p$-Wasserstein distance is used as a loss function in training generative models~\cite{arjovsky2017wasserstein, genevay, salimans}.

The $p$-Wasserstein distance is sensitive to geometric dissimilarities between the distributions.
Consider two distributions $\mu$ and $\nu = (1-\delta)\mu+\delta \nu'$ that differ only by a mass of $\delta$. The $p$-Wasserstein distance between $\mu$ and $\nu$ can be as high as $\delta^{1/p}W_p(\mu,\nu')$. Thus, as $p$ increases, the $W_p(\mu,\nu)$ increases by a rate of $\delta^{1/p}$, making $W_p$ more sensitive to such differences for larger values of $p$. The higher sensitivity of $p$-Wasserstein distance for $p>1$ makes it an attractive choice as a dissimilarity metric between distributions. Consequently, it can be used in 
clustering~\cite{el2020decwa, zhuang2022wasserstein} and barycenter computation~\cite{claici2018stochastic,cuturi2014fast, vaskevicius2023computational}.

The higher sensitivity of $p$-Wasserstein distance for larger values of $p$ also makes it susceptible to noise of two types: outliers and sampling discrepancy. Consider $\mu$ and $\nu = 0.99\mu + 0.01\nu'$ and $W_p(\mu,\nu')=1$, i.e., we add an \emph{outlier} mass of $\delta=0.01$ that is placed at a distance $1$ from $\mu$. In this case, $\mu$ and $\nu$ differ in only $1\%$ of mass and yet, the distance between $\mu$ and $\nu$ is $0.1$ when $p=2$, $0.21$ when $p=3$, and $1$ when $p=\infty$. Thus, for $p \ge 2$, outliers can disproportionately increase the distance between distributions.

Similar to outliers, sampling discrepancies in empirical distributions can also contribute disproportionately to the overall $p$-Wasserstein distance. As a result, in $2$-dimensions, the convergence rate of the empirical $p$-Wasserstein distance to the true distance drops to $n^{-1/2p}$ and for $p =\infty$, the empirical distance does not even converge to the real one~\cite{fournier2015rate}. To understand this phenomenon better, consider $p=2$ and a discrete distribution $\mu$ having two points $a$ and $b$ in its support, each assigned a probability mass of $1/2$. Let $c(a,b) =1$. Consider now two sets $X$ and $Y$ of $n$ samples drawn from $\mu$ and let $\mu_X$ (resp. $\mu_Y$) be the discrete distributions with points of $X$ (resp. $Y$) as the support and a mass of $1/n$ at each point in the support. Note that $\mathbb{E}[|\mu_X(a)-\mu_Y(a)|]=\Theta(1/\sqrt{n})$ and therefore, $W_2(\mu_X,\mu_Y)\approx n^{-1/4}$ and $W_p(\mu_X, \mu_Y) \approx n^{-1/2p}$. Thus, the rate of convergence for $p \ge 2$ is slower than for the case with $p=1$. Therefore, one needs significantly more samples to get an accurate estimate of the true $2$-Wasserstein distance. This restricts the use of $2$-Wasserstein distance (and also other higher values of $p$) as a loss function in learning tasks.

One way to overcome the impact of noise from outliers or sampling discrepancy is by using the partial $p$-Wasserstein distance.  
For \emph{$\alpha$-partial $p$-Wasserstein distance}, one wishes to compute the cheapest cost of a transport plan that transports $\alpha$ mass between distributions $\mu$ and $\nu$. Such transport plan is referred to as \emph{$\alpha$-optimal partial transport plan} (or simply \emph{$\alpha$-partial OT plan}). Given two distributions $\mu$ and $\tilde\nu = (1-\delta)\nu+\delta \nu'$, and under reasonable assumptions on the outlier distribution $\nu'$, one can show that the transport plan associated with 
a $(1-\delta)$-partial $p$-Wasserstein distance will transport mass only from the inliers. This observation was used to eliminate the impact of outliers in two distributions and applied to many ML tasks~\cite{choi2024generative,le2021robust, nietert2023outlier}. Most of these applications assume that the value of $\delta$ is given; see~\cite{caffarelli2010free, chapel2020partial, figalli2010optimal,nietert2022outlier}. Recently, \citet{phatak2022computing} introduced the idea of \emph{OT-profile}, which is a function that maps any $\alpha \in [0,1]$ to the $\alpha$-partial $p$-Wasserstein distance between $\mu$ and $\nu$. They showed that this function is a non-decreasing function\footnote{Their function maps $\alpha$ to the $p$th power of the $\alpha$-partial $p$-Wasserstein distance. In this paper, however, we assume that the function maps $\alpha$ to the partial $p$-Wasserstein distance and not its $p^{th}$ power.}, which can be used to also identify the value of $\delta$. 
All existing works that use partial $p$-Wasserstein distance to identify outliers are described for pairs of distribution. It is not clear how one can apply this distance on a set containing noisy distributions.

Additionally, there are two major drawbacks of using $(1-\delta)$-partial $p$-Wasserstein distance as a dissimilarity measure on sets of probability distributions. 
\begin{itemize}
    \item The $(1-\delta)$-partial $p$-Wasserstein distance does not satisfy the triangle inequality, and
    \item For two distributions $\mu$ and $\nu$ that differ by a mass less than $\delta$, the $(1-\delta)$-partial $p$-Wasserstein distance will be $0$, i.e., this cost is not sensitive to minor geometric differences in distributions.
\end{itemize}

In another line of work, given a parameter $\lambda > 0$, \citet{mukherjee2021outlier} presented a robust distance called the $\lambda$-ROBOT, which is simply the $p$-Wasserstein cost between $\mu$ and $\nu$ under the truncated ground distance metric $c_\lambda(a,b) = \min\{c(a,b), 2\lambda\}$
\footnote{Originally, \citet{mukherjee2021outlier} presented $\lambda$-ROBOT as the $1$-Wasserstein distance between $\mu$ and $\nu$ under $c_\lambda(\cdot,\cdot)$. For any $p>1$, one can extend their distance by computing the $p$-Wasserstein distance under $c_\lambda(\cdot,\cdot)$.}. Although $\lambda$-ROBOT is a metric, it remains sensitive to outliers and sampling discrepancies. For instance, similar to the $p$-Wasserstein distance, a mass of $\delta$ can disproportionately increase $\lambda$-ROBOT by $2\lambda\delta^{1/p}$. Also, the convergence rate of the empirical $\lambda$-ROBOT to the true $\lambda$-ROBOT in two-dimensional space would be $2\lambda n^{-1/2p}$.

An important open question is the following:
\vspace{0.5em}

{\it Can we design a new \underline{metric} that, for $p > 1$, retains the sensitivity of $p$-Wasserstein distance to minor geometric differences in the distributions, but is robust to noise?} 

\vspace{0.5em}

\paragraph{Our Results:} For any $k \ge 0$, we introduce a partial $p$-Wasserstein distance-based metric called $(p,k)$-RPW and we denote it by $\ours{p,k}{\cdot,\cdot}$. Our distance is simply the smallest $\varepsilon$ such that the $(1-\varepsilon)$-partial $p$-Wasserstein distance is at most $k\varepsilon$.

\begin{figure}
    \centering
    \begin{tabular}{c@{\hskip 2em}c}
         \includegraphics[width=0.4\linewidth]{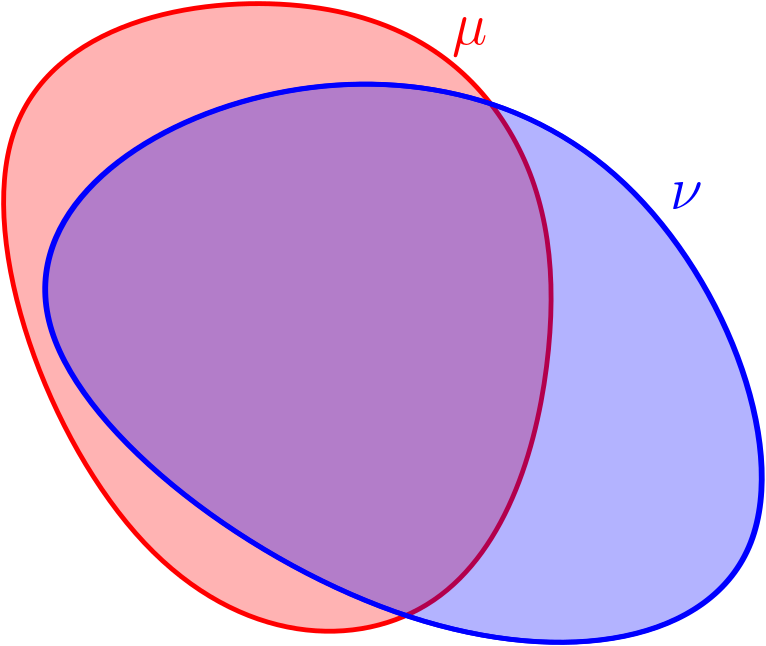}&\includegraphics[width=0.4\linewidth]{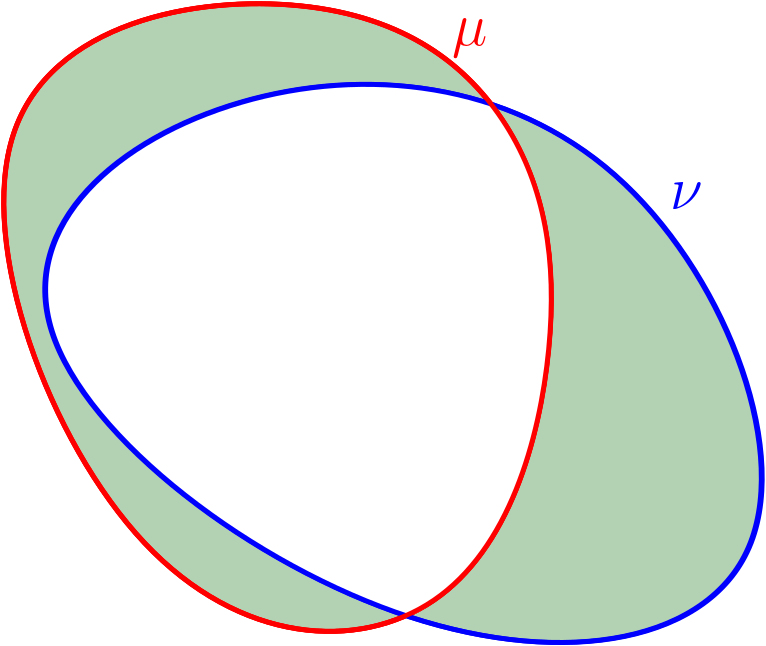}  \\[2pt]
         (i) Distributions $\mu$ and $\nu$ & (ii) $\|\mu-\nu\|_{\mathrm{TV}}$  \\[8pt]
         \includegraphics[width=0.4\linewidth]{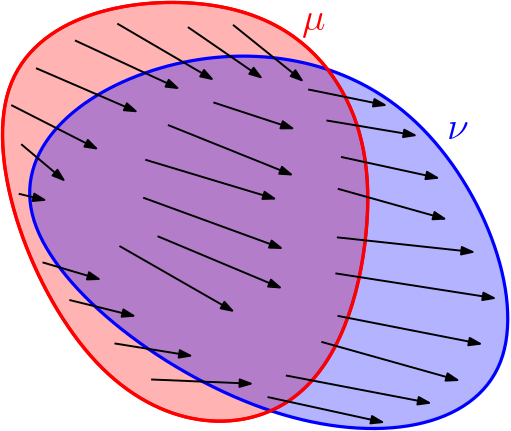}&\includegraphics[width=0.4\linewidth]{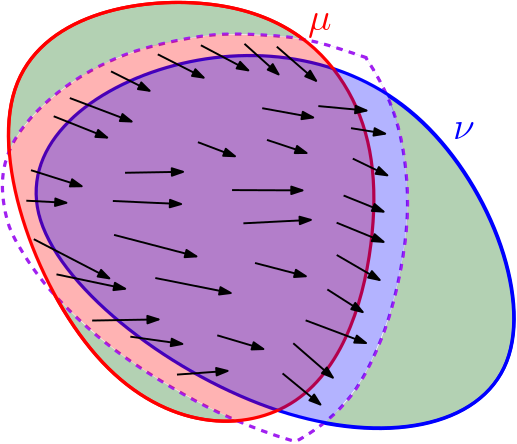}  \\
         (iii) $W_p(\mu, \nu)$ & (iv) $\ours{p,k}{\mu,\nu}$
    \end{tabular}
    \caption{Interpretations of different distance functions.}
    \label{fig:enter-label}
\end{figure}
 
Our distance combines the total variation distance with the $p$-Wasserstein distance. Recollect that the total variation distance between $\mu$ and $\nu$ is the mass that remains after all the co-located mass is transported. In Figure~\ref{fig:enter-label} (ii), the mass inside the green region represents the TV distance between two distributions. The $p$-Wasserstein distance measures the cost of the cheapest transport plan that leaves no mass behind. In Figure~\ref{fig:enter-label} (iii), the cost of moving all the mass from the red region to the blue region represents the $p$-Wasserstein distance. Our distance balances the two, i.e., we find an $\varepsilon$ such that a transport plan that leaves $\varepsilon$ mass behind has a cost of $k\varepsilon$. In Figure~\ref{fig:enter-label} (iv), our distance balances the cost of moving mass from the red region to the blue region with the mass remaining inside the green region. The robustness of our distance follows from the observation that noisy mass will be part of the green region (i.e., mass that is not transported) and therefore, cannot contribute disproportionately to the cost. 
We establish the following properties for our distance function:
\begin{itemize}
\item {\it Metric Property:} For any choice of $p \ge 1$ and $k \ge 0$, the distance $(p,k)$-RPW is a metric. Unlike the $(1-\delta)$-partial $p$-Wasserstein distance, our distance function satisfies the triangle inequality. Furthermore, unlike the $(1-\delta)$-partial $p$-Wasserstein distance, where two distributions $\mu$ and $\nu$ can have a cost of $0$ even if they differ by a mass of $\delta$, for any two distributions $\mu$ and $\nu$ with $\mu\neq \nu$, $\ours{p,k}{\mu,\nu} > 0$. See Theorem~\ref{lemma:metric}.
\item {\it Robust to Outliers:} Given two distributions $\mu$ and $\nu$, adding a mass of $\delta$ to $\nu$ will change $\ours{p,k}{\mu,\nu}$ by at most $\pm \delta$. In other words, an outlier mass of $\delta=0.01$ cannot increase the $\ours{p,k}{\mu,\nu}$ by more than $0.01$. Recollect that this can be as high as $0.1$ for $2$-Wasserstein distance and $0.21$ for $3$-Wasserstein distance. See Theorem~\ref{lemma:distortion}. 
\item {\it Robust to Sampling Discrepancy:} In $2$ dimensions, the $(p,1)$-RPW between empirical distributions converges to the true $(p,1)$-RPW distance at a rate of $n^{-\frac{p}{4p-2}}$. In contrast, the rate of convergence of the $2$-Wasserstein and the $\lambda$-ROBOT distances are $n^{-1/2p}$ and $2\lambda n^{-1/2p}$, respectively. Note that, for $p=\infty$, our distance converges at the rate of $n^{-1/4}$ whereas the $\infty$-Wasserstein distance does not converge. Our results extend to any dimension. For $d\ge 2$ and $p > \frac{d}{2}$, we show that the convergence rate of the empirical $(p,1)$-RPW is significantly faster than that of the $p$-Wasserstein distance. See Theorem~\ref{cor:convergence}.  
\end{itemize}

Alternatively, in Figure~\ref{fig:OTprofile}, suppose point $(x^*,y^*)$ is the intersection point of the line $y=k(1-x)$ with the OT-profile. Then, our distance is simply $(1-x^*)$. Note that when $k=0$, our distance becomes the total variation distance. When we set $k$ to be sufficiently large, our distance becomes $W_p(\mu,\nu)/k$. In this sense, our distance interpolates between the total variation distance and the $p$-Wasserstein distance.    
By choosing the parameters $k$ or $p$ correctly, we can reduce our distance to several well-known distances. 

\begin{figure}
    \centering
    \includegraphics[width=0.9\linewidth]{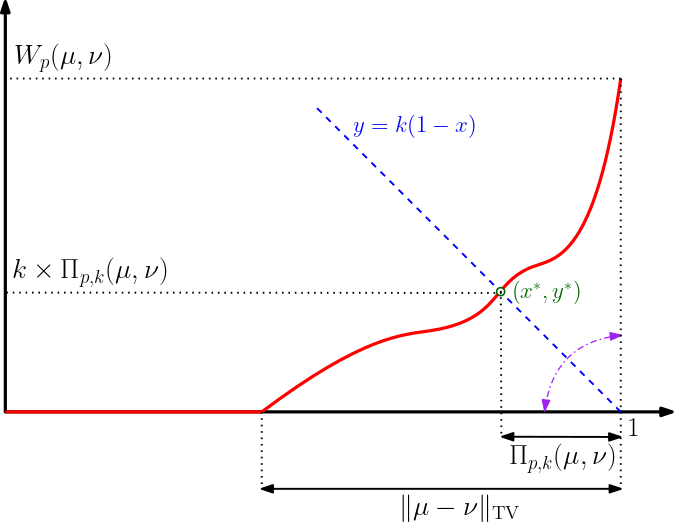}
    \vspace{-1em}
    \caption{Interpretation of distances based on the OT-profile.}
    \label{fig:OTprofile}
\end{figure}

\begin{itemize}
\item {\it Relation to \levy distance:} $(\infty, 1)$-RPW between any two distributions $\mu$ and $\nu$ is equal to their \levy distance. See Lemma~\ref{lemma:relationLevy}.
\item {\it Relation to total variation distance:} For any $p\ge1$, $(p,0)$-RPW between any two distributions $\mu$ and $\nu$ is equal to the total variation distance between $\mu$ and $\nu$. See Lemma~\ref{lemma:relationTV}.
\item {\it Relation to $p$-Wasserstein distance:} For any $p\ge1$, as $k \rightarrow \infty$, $k\times \ours{p,k}{\mu,\nu}$ approaches the $p$-Wasserstein distance. See Lemma~\ref{lemma:relationWasserstein}. % between $\mu$ and $\nu$.
\end{itemize}

In our experiments, we use our distance from a query image to rank images in a database of noisy images. Using this, we identify the top ten images in our database that are similar to the query. For the MNIST, CIFAR-10, and COREL datasets, our distance produces a higher accuracy in comparison to the accuracy produced by the $1$-Wasserstein, $2$-Wasserstein, and the TV distances.

\subsection{Notations.}
For any distribution $\mu$ defined over a compact set $\mcX$, let $\mass{\mu}:=\int_\mcX\mathrm{d}\mu(x)$ denote the total mass of $\mu$. 
For a metric space $(\mcX, c)$, define the diameter of $\mcX$ as $\max_{(a,b)\in \mcX\times\mcX}c(a,b)$.
For any pair of distributions $\mu$ and $\nu$ defined over $(\mcX, c)$ and parameters $p\ge 1$ and $\alpha\in[0,1]$, let $W_{p, \alpha}(\mu, \nu)$ denote the $\alpha$-partial $p$-Wasserstein distance between $\mu$ and $\nu$.

\section{Robust Partial \texorpdfstring{$p$}{}-Wasserstein Metric}
\label{sec:distance}
Given two probability distributions $\mu$ and $\nu$ defined over a metric space $(\mcX, c)$ with a unit diameter and any parameters $p\ge 1$ and $k\ge 0$, we define the $(p,k)$-\emph{Robust Partial $p$-Wasserstein distance} or simply $(p,k)$-\emph{RPW} between $\mu$ and $\nu$, denoted by $\ours{p,k}{\mu, \nu}$, to be the minimum value $\varepsilon\ge0$ such that the $(1-\varepsilon)$-partial $p$-Wasserstein distance between $\mu$ and $\nu$ is at most $k\varepsilon$; more precisely,
\begin{equation}\label{eq:ours}
    \ours{p,k}{\mu, \nu} = \inf\{\varepsilon\ge 0\mid W_{p, 1-\varepsilon}(\mu, \nu) \le k\varepsilon\}.
\end{equation}
Alternatively, let $P=(x^*, y^*)$ be the intersection point of the OT-profile curve with the line $y=k(1-x)$. Then, $(p,k)$-RPW between $\mu$ and $\nu$ would be $\ours{p,k}{\mu, \nu} = 1-x^*$. 

We show that $(p,k)$-RPW distance satisfies all the metric properties. The triangle inequality is the only property for which the proof is non-trivial. We provide a sketch of the proof below; see Appendix~\ref{sec:appendix-2} for details.

For any three probability distributions $\mu$, $\nu$, and $\kappa$, suppose $\ours{p,k}{\mu, \kappa}=\varepsilon_{1}$ and $\ours{p,k}{\kappa, \nu} =\varepsilon_{2}$. 
Let $\gamma_{1}$ denote a $(1-{\varepsilon_1})$-partial OT plan from $\mu$ to $\kappa$ and $\gamma_2$ be a $(1-\varepsilon_2)$-partial OT plan from $\kappa$ to $\nu$. In Figure~\ref{fig:metric}, the blobs in the left, middle, and right show the distributions $\mu$, $\kappa$, and $\nu$, respectively, and the blue (resp. red) arrows correspond to the transport plan $\gamma_1$ (resp. $\gamma_2$). Let $\kappa_{1}$ (resp. $\kappa_2$) be the mass of $\kappa$ that is transported from $\mu$ (resp. to $\nu$) by $\gamma_1$ (resp. $\gamma_2$) (shown in Figure~\ref{fig:metric} by the blue (resp. red) region inside the distribution $\kappa$). Define $\kappa_{\mathrm{c}}$ to be the distribution of mass of $\kappa$ that is common to both $\kappa_1$ and $\kappa_2$ (the purple region inside the distribution $\kappa$ in Figure~\ref{fig:metric}).
Note that the total mass of $\kappa_1$ that is not transported by $\gamma_2$ is at most $\varepsilon_2$; therefore,
\begin{equation}\label{eq:metric_0}
    \mass{\kappa_c}\ge \mass{\kappa_1}-\varepsilon_2= 1-\varepsilon_1-\varepsilon_2.
\end{equation}
Define $\mu_{\mathrm{c}}$ (resp. $\nu_{\mathrm{c}}$) to be the distribution whose mass is transported to (resp. from) $\kappa_c$ in $\gamma_1$ (resp. $\gamma_2$). In Figure~\ref{fig:metric}, the distribution $\mu_{\mathrm{c}}$ (resp. $\nu_{\mathrm{c}}$) is depicted by the purple region inside distribution $\mu$ (resp. $\nu$). From Equation~\eqref{eq:metric_0},
\begin{equation}\label{eq:metric_00}
    \mass{\mu_{\mathrm{c}}}=\mass{\nu_{\mathrm{c}}} = \mass{k_{\mathrm{c}}} \ge 1-\varepsilon_1-\varepsilon_2
\end{equation}
Therefore, we have
\begin{align}
    W_{p, 1-\varepsilon_1-\varepsilon_2}(\mu, \nu) &\le W_{p}(\mu_{\mathrm{c}}, \nu_{\mathrm{c}})\nonumber \\ &\le W_{p}(\mu_{\mathrm{c}}, \kappa_{\mathrm{c}}) + W_{p}(\kappa_{\mathrm{c}}, \nu_{\mathrm{c}})\nonumber \\ &\le k(\varepsilon_1 + \varepsilon_2).\label{eq:metric_4}
\end{align}
The second inequality follows from the triangle inequality of $p$-Wasserstein distances and the third inequality follows from the definition of our distance. Furthermore, 
since $\ours{p,k}{\mu,\nu}$ is the minimum $\varepsilon$ with $W_{p,1-\varepsilon}(\mu,\nu)\le k\varepsilon$, from Equation~\eqref{eq:metric_4}, $\ours{p,k}{\mu, \nu}\le   \varepsilon_{1} + \varepsilon_{2}$,
as desired.
\begin{figure}
    \centering
    \includegraphics[width=0.9\linewidth]{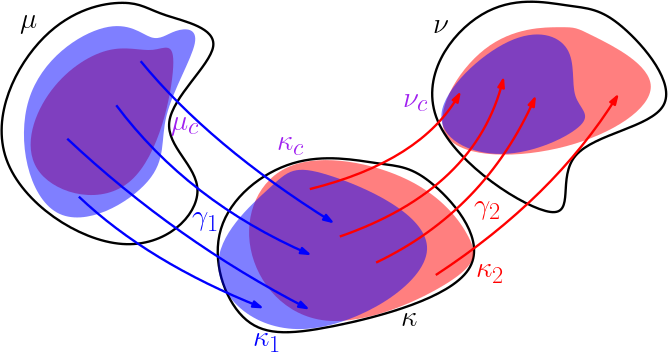}
    \vspace{-0.5em}
    \caption{The triangle inequality of the RPW distance function.}
    \label{fig:metric}
\end{figure}

\begin{restatable}{theorem}{metric}
Given a metric space $(\mcX, c)$ with a unit diameter and any parameters $p\ge 1$ and $k\ge 0$, the $(p,k)$-RPW distance function $\ours{p, k}{\cdot, \cdot}$ for all probability distributions defined over $(\mcX, c)$ is a metric.
\label{lemma:metric}
\end{restatable}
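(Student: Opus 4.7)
The plan is to verify the four metric axioms. Non-negativity is immediate since $\varepsilon$ ranges over $[0,\infty)$ in~\eqref{eq:ours}, and symmetry follows because $W_{p,1-\varepsilon}(\mu,\nu) = W_{p,1-\varepsilon}(\nu,\mu)$, so the feasibility set in~\eqref{eq:ours} is symmetric in $\mu,\nu$. For identity of indiscernibles, the direction $\mu = \nu \Rightarrow \ours{p,k}{\mu,\nu} = 0$ is immediate by taking $\varepsilon = 0$ and using $W_{p,1}(\mu,\mu) = 0$. For the converse, suppose $\mu \neq \nu$, so $W_p(\mu,\nu) > 0$. The partial cost $\alpha \mapsto W_{p,\alpha}(\mu,\nu)$ is non-decreasing and, by weak compactness of partial couplings together with lower semicontinuity of the cost, satisfies $\lim_{\alpha \to 1^-} W_{p,\alpha}(\mu,\nu) = W_p(\mu,\nu)$. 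Hence there is an $\alpha_0 < 1$ with $W_{p,\alpha}(\mu,\nu) > \tfrac{1}{2}W_p(\mu,\nu)$ for all $\alpha \in [\alpha_0,1]$, and for $k > 0$ (the case $k=0$ is Lemma~\ref{lemma:relationTV}) any $\varepsilon \in (0,\min(1-\alpha_0, W_p(\mu,\nu)/(2k)))$ violates $W_{p,1-\varepsilon}(\mu,\nu) \le k\varepsilon$, giving $\ours{p,k}{\mu,\nu} > 0$.

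The bulk of the proof is the triangle inequality, for which I would follow the gluing sketch given just before the theorem. Set $\varepsilon_1 = \ours{p,k}{\mu,\kappa}$ and $\varepsilon_2 = \ours{p,k}{\kappa,\nu}$, and let $\gamma_1, \gamma_2$ be $(1-\varepsilon_i)$-partial OT plans realizing these values. Extract the common submeasure $\kappa_c$ of $\kappa$ touched by both plans, together with the induced submeasures $\mu_c \le \mu$ and $\nu_c \le \nu$, all of mass at least $1 - \varepsilon_1 - \varepsilon_2$ as in~\eqref{eq:metric_0}. Applying the standard $W_p$ triangle inequality to $\mu_c, \kappa_c, \nu_c$ gives $W_{p,1-\varepsilon_1-\varepsilon_2}(\mu,\nu) \le W_p(\mu_c,\kappa_c) + W_p(\kappa_c,\nu_c) \le k(\varepsilon_1+\varepsilon_2)$, and the definition of $\ours{p,k}{\cdot,\cdot}$ then yields $\ours{p,k}{\mu,\nu} \le \varepsilon_1 + \varepsilon_2$.

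The main obstacle is making the construction of $\kappa_c$, $\mu_c$, $\nu_c$ rigorous in the continuous measure-theoretic setting. I expect to disintegrate $\gamma_1, \gamma_2$ against their $\kappa$-marginals $\kappa_1, \kappa_2$ and take $\kappa_c := \kappa_1 \wedge \kappa_2$ (the pointwise minimum of the two marginals), then define $\mu_c$ and $\nu_c$ as the first and second marginals of the restrictions of $\gamma_1, \gamma_2$ to the $\kappa_c$-portion. The cost of each restricted coupling is dominated by the cost of the full coupling, giving $W_p(\mu_c,\kappa_c) \le w_p(\gamma_1) \le k\varepsilon_1$ and symmetrically $W_p(\kappa_c,\nu_c) \le k\varepsilon_2$, which is exactly what the chain of inequalities~\eqref{eq:metric_4} requires. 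A secondary technical issue is the attainment of the infima defining $\varepsilon_1,\varepsilon_2$; this follows from the same weak-compactness argument used in the identity-of-indiscernibles step, and if attainment should fail one passes to approximating sequences $\varepsilon_i^{(n)} \searrow \varepsilon_i$ with plans $\gamma_i^{(n)}$ and takes limits.
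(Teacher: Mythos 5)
Your proof follows essentially the same route as the paper: the three easy axioms are handled identically, and the triangle inequality uses the same gluing argument with the common submeasure $\kappa_c = \kappa_1 \wedge \kappa_2$ and the induced $\mu_c,\nu_c$, giving $W_{p,1-\varepsilon_1-\varepsilon_2}(\mu,\nu)\le W_p(\mu_c,\kappa_c)+W_p(\kappa_c,\nu_c)\le k(\varepsilon_1+\varepsilon_2)$. The one place you are slightly more careful is positivity: the paper simply asserts that since $\varepsilon=0$ is infeasible the minimum is positive, implicitly assuming attainment of the infimum, whereas you show a positive lower bound on the feasible set directly via right-continuity of $\alpha\mapsto W_{p,\alpha}$ at $1$; this closes a small gap the paper glosses over. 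The only thing you omit is the paper's explicit dispatch of the degenerate case $\varepsilon_1+\varepsilon_2\ge 1$, which is needed for $W_{p,1-\varepsilon_1-\varepsilon_2}$ to be well-defined, but the conclusion there is trivial since $\ours{p,k}{\mu,\nu}\le 1$.
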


The following lemma highlights a useful feature of our metric, which is used in deriving its important properties.

\begin{restatable}{lemma}{maxalphabeta}\label{lemma:max_alpha_beta}
    Given two probability distributions $\mu$ and $\nu$ defined over a metric space $(\mcX, c)$ with a unit diameter and parameters $p\ge 1$ and $k\ge0$, suppose $W_{p,1-\alpha}(\mu, \nu)=k\beta$ for some $\alpha, \beta\ge 0$. Then, $\ours{p,k}{\mu, \nu}\le \max\{\alpha, \beta\}$. Furthermore, if $k\neq0$, then $\ours{p,k}{\mu, \nu}\le \min\{\alpha, \beta\}$.
\end{restatable}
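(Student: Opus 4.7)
The plan is to verify both inequalities by constructing explicit feasible points for the infimum
\[
\ours{p,k}{\mu,\nu} \;=\; \inf\{\varepsilon \ge 0 \mid W_{p,1-\varepsilon}(\mu,\nu) \le k\varepsilon\},
\]
leveraging the non-decreasing monotonicity of the partial OT-profile $\alpha' \mapsto W_{p,\alpha'}(\mu,\nu)$.

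For $\ours{p,k}{\mu,\nu} \le \max\{\alpha,\beta\}$, I would set $\varepsilon_0 := \max\{\alpha,\beta\}$ and directly verify feasibility of $\varepsilon_0$. From $\varepsilon_0 \ge \alpha$, equivalently $1-\varepsilon_0 \le 1-\alpha$, monotonicity gives $W_{p,1-\varepsilon_0}(\mu,\nu) \le W_{p,1-\alpha}(\mu,\nu) = k\beta$; from $\varepsilon_0 \ge \beta$ together with $k \ge 0$ we also get $k\beta \le k\varepsilon_0$. Chaining these, $W_{p,1-\varepsilon_0}(\mu,\nu) \le k\varepsilon_0$, so $\varepsilon_0$ lies in the feasible set and the infimum is at most $\varepsilon_0$.

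For the ``Furthermore'' bound $\ours{p,k}{\mu,\nu} \le \min\{\alpha,\beta\}$ under $k \ne 0$, I would set $\varepsilon_1 := \min\{\alpha,\beta\}$ and attempt to prove feasibility via a case split on whether $\alpha \le \beta$ or $\alpha > \beta$. In the first case $\varepsilon_1 = \alpha$, and feasibility reduces to the inequality $W_{p,1-\alpha}(\mu,\nu) = k\beta \le k\alpha$; in the second case $\varepsilon_1 = \beta$, and feasibility requires $W_{p,1-\beta}(\mu,\nu) \le k\beta$, which is the reverse of what pure monotonicity yields (namely, $W_{p,1-\beta}(\mu,\nu) \ge W_{p,1-\alpha}(\mu,\nu) = k\beta$). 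To close each sub-case I would exploit additional structural features of the OT-profile that become available only when $k \ne 0$: specifically, a symmetric reinterpretation of the pair $(\alpha,\beta)$ as mass-deficit and $k$-scaled cost that permits swapping their roles on the OT-profile, or equivalently a local flat segment of the profile between $1-\max\{\alpha,\beta\}$ and $1-\min\{\alpha,\beta\}$ forced by the boundary behavior of the feasibility condition.

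The hard part is the ``Furthermore'' bound. The $\max$ bound follows purely from monotonicity, whereas the refinement to $\min$ genuinely exploits the positivity of $k$, which makes the cost coordinate $k\beta$ comparable, via rescaling, to the mass coordinate $\alpha$. I expect the finished proof either to make this swap explicit through the identity $W_{p,1-\alpha}(\mu,\nu)/k = \beta$, or to construct a witness partial transport plan of mass $1-\varepsilon_1$ and cost at most $k\varepsilon_1$ by convex interpolation between the trivial zero-mass plan and the given $(1-\alpha)$-partial optimal plan of cost $k\beta$, using the convexity of $\alpha' \mapsto W_{p,\alpha'}^p(\mu,\nu)$ in $\alpha'$.
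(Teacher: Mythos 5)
Your argument for the $\max$ bound is correct and matches the paper's in substance (you avoid the paper's explicit case split by directly verifying feasibility of $\varepsilon_0=\max\{\alpha,\beta\}$, which is a small cosmetic improvement).

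The trouble is with the ``Furthermore'' bound, and you have in fact correctly diagnosed the symptom without drawing the right conclusion. The statement as printed contains a typo: the second inequality should read $\ours{p,k}{\mu,\nu} \ge \min\{\alpha,\beta\}$, not $\le$. You can see this both from the paper's own proof (which argues by contradiction that $\delta := \ours{p,k}{\mu,\nu}$ cannot be strictly less than $\min\{\alpha,\beta\}$) and from how the lemma is invoked downstream, e.g.\ in the proof of the robustness lemma where it yields $\ours{p,k}{\mu,\tilde\mu} \ge \min\{\delta(1-\alpha), \tfrac{1}{k}W_{p,1-\delta(1-\alpha)}(\mu,\tilde\mu)\}$. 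The claim $\ours{p,k}{\mu,\nu}\le\min\{\alpha,\beta\}$ you are trying to prove is actually false: take any $\mu\neq\nu$ and $\alpha=0$, so $\beta = W_p(\mu,\nu)/k > 0$; then $\min\{\alpha,\beta\}=0$, yet $\ours{p,k}{\mu,\nu}>0$ by the identity property of the metric. When you observed that in the case $\alpha>\beta$ ``monotonicity yields the reverse,'' that was the signal that the target inequality has the wrong direction; the ``additional structural features'' (a flat segment, swapping roles via rescaling) you reach for to close the gap do not exist and cannot rescue a false statement. The actual argument is a short contradiction: with $\delta=\ours{p,k}{\mu,\nu}$, if $\delta<\min\{\alpha,\beta\}$ then by monotonicity $W_{p,1-\delta}(\mu,\nu)\ge W_{p,1-\alpha}(\mu,\nu)=k\beta$; if $\alpha\le\beta$ then $k\beta\ge k\alpha>k\delta$, and if $\alpha>\beta$ then $k\beta>k\delta$ directly, so in either case $W_{p,1-\delta}(\mu,\nu)>k\delta$, contradicting the defining property of $\delta$ (the infimum is attained since the OT-profile is continuous).
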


\section{Robustness Properties}\label{sec:robustness}
In Section~\ref{sec:outlier}, we show that an outlier mass of $\delta$ cannot increase the RPW distance by more than $\delta$, i.e., the RPW distance is robust to outliers.  
In Section~\ref{sec:convergence}, we show that the rate of convergence of the empirical RPW distance to the real RPW distance is asymptotically smaller than the rate of convergence for $p$-Wasserstein distance. Thus, we show that the RPW distance is more robust to outliers as well as sampling discrepancies than the $p$-Wasserstein distance.

\subsection{Robustness to Outlier Noise}\label{sec:outlier}
For $\delta\in(0,1)$, let $\tilde{\nu}:=(1-\delta)\nu + \delta\nu'$ be a noisy distribution obtained from $\nu$ contaminated with $\delta$ mass from a noise distribution $\nu'$. In Theorem~\ref{lemma:distortion}, we show that by distorting the noise distribution $\nu'$, an adversary cannot arbitrarily change the $(p,k)$-RPW distance between $\mu$ and $\tilde{\nu}$.

\begin{restatable}{theorem}{distortion}\label{lemma:distortion}
    For any probability distributions $\mu, \nu$, and $\nu'$ defined over a metric space $(\mcX, c)$ with a unit diameter and parameters $p\ge 1, k\ge0$, and $\delta\in(0,1)$, let $\tilde{\nu}=(1-\delta)\nu + \delta\nu'$. Then, 
    \[\ours{p,k}{\mu,\nu} - \delta\le \ours{p,k}{\mu, \tilde{\nu}}\le (1-\delta)\ours{p,k}{\mu,\nu} + \delta.\]
\end{restatable}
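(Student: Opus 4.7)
The inequality splits into two directions that are proved by quite different arguments. The upper bound requires a direct transport-plan construction, because it is tighter than what the triangle inequality alone would give (specifically, it has a multiplicative $(1-\delta)$ factor rather than just an additive $+\delta$). The lower bound, by contrast, follows from the triangle inequality of Theorem~\ref{lemma:metric} once I establish the auxiliary bound $\ours{p,k}{\nu,\tilde\nu}\le\delta$.

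\textbf{Upper bound.} Let $\varepsilon=\ours{p,k}{\mu,\nu}$. For any $\varepsilon'\in(\varepsilon,1]$, the definition of the RPW infimum together with monotonicity of the OT-profile yields a $(1-\varepsilon')$-partial OT plan $\gamma$ from $\mu$ to $\nu$ whose cost satisfies $w_p(\gamma)\le k\varepsilon'$. I then scale the plan to obtain $\gamma^{*}:=(1-\delta)\gamma$. Its first marginal is $(1-\delta)\pi_1(\gamma)\le(1-\delta)\mu\le\mu$ and its second marginal is $(1-\delta)\pi_2(\gamma)\le(1-\delta)\nu\le\tilde\nu$, so $\gamma^{*}$ is a valid partial plan from $\mu$ to $\tilde\nu$. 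Its total mass is $(1-\delta)(1-\varepsilon')$, and its $p$-Wasserstein cost is $w_p(\gamma^{*})=(1-\delta)^{1/p}w_p(\gamma)\le(1-\delta)^{1/p}k\varepsilon'$. Setting $\tau:=(1-\delta)\varepsilon'+\delta$ so that $1-\tau=(1-\delta)(1-\varepsilon')$, and using $(1-\delta)^{1/p}\le 1$ together with $\varepsilon'\le\tau$ (which holds since $\tau-\varepsilon'=\delta(1-\varepsilon')\ge0$), I obtain $W_{p,1-\tau}(\mu,\tilde\nu)\le(1-\delta)^{1/p}k\varepsilon'\le k\tau$. Hence $\tau$ lies in the feasible set of the infimum defining $\ours{p,k}{\mu,\tilde\nu}$, so $\ours{p,k}{\mu,\tilde\nu}\le\tau=(1-\delta)\varepsilon'+\delta$. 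Letting $\varepsilon'\downarrow\varepsilon$ gives the stated bound.

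\textbf{Lower bound.} First I show $\ours{p,k}{\nu,\tilde\nu}\le\delta$. Since $(1-\delta)\nu\le\tilde\nu=(1-\delta)\nu+\delta\nu'$, the identity coupling of the sub-measure $(1-\delta)\nu$ with itself is a valid $(1-\delta)$-partial plan from $\nu$ to $\tilde\nu$ with cost $0$. Consequently $W_{p,1-\delta}(\nu,\tilde\nu)=0\le k\delta$, so $\delta$ is feasible for the defining infimum and $\ours{p,k}{\nu,\tilde\nu}\le\delta$. The triangle inequality (Theorem~\ref{lemma:metric}) then yields $\ours{p,k}{\mu,\nu}\le\ours{p,k}{\mu,\tilde\nu}+\ours{p,k}{\tilde\nu,\nu}\le\ours{p,k}{\mu,\tilde\nu}+\delta$, which rearranges to the desired $\ours{p,k}{\mu,\nu}-\delta\le\ours{p,k}{\mu,\tilde\nu}$.

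\textbf{Main obstacle.} The real work lies in the upper bound, where the key insight is that scaling the optimal partial plan by $(1-\delta)$ (rather than appealing to the triangle inequality) is precisely what converts the bound into the tighter $(1-\delta)\varepsilon+\delta$ form: the $p$-Wasserstein cost shrinks by the factor $(1-\delta)^{1/p}\le 1$ while the missing-mass parameter increases by exactly $\delta$. The care points are verifying that the scaled plan's second marginal still lies below $\tilde\nu$ (which uses the pointwise bound $(1-\delta)\nu\le\tilde\nu$) and handling possible non-attainment of the RPW infimum via the $\varepsilon'\downarrow\varepsilon$ limit.
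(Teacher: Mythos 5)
Your proof is correct and follows essentially the same route as the paper: for the lower bound, both show $\ours{p,k}{\nu,\tilde\nu}\le\delta$ via $W_{p,1-\delta}(\nu,\tilde\nu)=0$ and invoke the triangle inequality; for the upper bound, both rescale a near-optimal partial plan by $(1-\delta)$ to obtain a $(1-\delta)(1-\varepsilon)$-partial plan from $\mu$ to $\tilde\nu$ of cost $(1-\delta)^{1/p}k\varepsilon\le k\big((1-\delta)\varepsilon+\delta\big)$. Your treatment is slightly more careful than the paper's in that you pass through $\varepsilon'\downarrow\varepsilon$ rather than implicitly assuming the infimum in the RPW definition is attained, and you check feasibility directly rather than citing Lemma~\ref{lemma:max_alpha_beta}, but these are cosmetic variations on the same argument.
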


For a distribution $\mu$ and a noisy distribution $\tilde\mu$ that differs from $\mu$ by only a $\delta$ fraction of mass (i.e., $\|\mu-\tilde\mu\|_{TV}=\delta$), consider the following assumption:
\begin{itemize}
    \item[(A1)] The $(1-\frac{\delta}{10})$-partial $p$-Wasserstein distance between $\mu$ and $\tilde\mu$ is at least $\frac{1}{2}W_p(\mu, \tilde\mu)$.
\end{itemize}
Assuming (A1), in the following lemma, we show that the $(p,k)$-RPW distance between $\mu$ and $\tilde\mu$ is proportionate to $\min\left\{\delta, \frac{1}{k}W_p(\mu, \tilde\mu) \right\}$.

\begin{restatable}{lemma}{robustnesss}\label{lemma:robustness}
    For a probability distribution $\mu$ defined over a metric space $(\mcX, c)$ with a unit diameter and $\delta>0$, let $\tilde\mu$ be a probability distribution that differs from $\mu$ by a $\delta$ fraction of mass satisfying assumption (A1). Then, for any parameters $p\ge 1$ and $k>0$,
    \begin{align*}
        \ours{p,k}{\mu, \tilde{\mu}} = \Theta\left(\min\left\{\delta, \frac{1}{k}W_p(\mu, \tilde\mu) \right\}\right).
    \end{align*}
\end{restatable}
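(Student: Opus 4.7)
The plan is to sandwich $\varepsilon^\ast := \ours{p,k}{\mu, \tilde\mu}$ between constant multiples of
$M := \min\{\delta,\, W_p(\mu, \tilde\mu)/k\}$, which is exactly the $\Theta(M)$ conclusion. Both directions follow rather directly from Equation~\eqref{eq:ours} once (A1) and the non-decreasing behaviour of $\alpha \mapsto W_{p,\alpha}(\mu,\tilde\mu)$ (noted in the footnote of the OT-profile discussion) are in hand.

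\textbf{Upper bound $\varepsilon^\ast \le M$.} I would exhibit two explicit feasible witnesses for the infimum in Equation~\eqref{eq:ours}. Since $\|\mu - \tilde\mu\|_{\mathrm{TV}} = \delta$, the two distributions share a common mass of $1-\delta$, so transporting only this common mass costs $0$, giving $W_{p, 1-\delta}(\mu, \tilde\mu) = 0 \le k\delta$ and hence $\varepsilon^\ast \le \delta$. Second, whenever $W_p(\mu, \tilde\mu)/k \le 1$, setting $\varepsilon := W_p(\mu, \tilde\mu)/k$ yields $W_{p, 1-\varepsilon}(\mu, \tilde\mu) \le W_p(\mu, \tilde\mu) = k\varepsilon$, so $\varepsilon^\ast \le W_p(\mu, \tilde\mu)/k$; in the complementary case $W_p(\mu, \tilde\mu)/k > 1$ the bound is trivial because $\varepsilon^\ast \le 1$ always (take $\varepsilon = 1$). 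Combining yields $\varepsilon^\ast \le M$.

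\textbf{Lower bound $\varepsilon^\ast \ge M/10$.} I split on whether $\varepsilon^\ast \ge \delta/10$. If so, then immediately $\varepsilon^\ast \ge \delta/10 \ge M/10$. Otherwise $1 - \varepsilon^\ast > 1 - \delta/10$, and monotonicity of $\alpha \mapsto W_{p,\alpha}(\mu,\tilde\mu)$ together with assumption~(A1) gives
\[
W_{p, 1-\varepsilon^\ast}(\mu, \tilde\mu) \;\ge\; W_{p, 1-\delta/10}(\mu, \tilde\mu) \;\ge\; \tfrac{1}{2}\, W_p(\mu, \tilde\mu).
\]
The defining inequality $W_{p, 1-\varepsilon^\ast}(\mu, \tilde\mu) \le k\varepsilon^\ast$ then forces $\varepsilon^\ast \ge W_p(\mu, \tilde\mu)/(2k) \ge M/2$. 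In either branch $\varepsilon^\ast \ge M/10$, completing the argument.

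\textbf{Main obstacle.} The principal subtle point is the small-$\varepsilon^\ast$ branch of the lower bound: assumption~(A1) is exactly what prevents an adversary from concealing the discrepancy inside a tiny additional sliver of transported mass, and the threshold $\delta/10$ in the case split is deliberately matched to the $\delta/10$ appearing in~(A1); any weaker assumption would break the reduction. A minor technicality is that I treated the infimum in Equation~\eqref{eq:ours} as attained at $\varepsilon^\ast$; this can be justified by monotonicity and right-continuity of $\alpha \mapsto W_{p,\alpha}(\mu,\tilde\mu)$, or avoided by running the same argument for $\varepsilon^\ast + \eta$ and sending $\eta \to 0^+$.
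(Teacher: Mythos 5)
Your proof is correct and follows essentially the same route as the paper: the paper factors the argument through an intermediate bound (Lemma~\ref{lemma:robustness-app}, parameterized by $\alpha$ and specialized to $\alpha=0.9$) and invokes Lemma~\ref{lemma:max_alpha_beta}, but the underlying content — the two witnesses $\varepsilon=\delta$ and $\varepsilon=W_p/k$ for the upper bound, and the $\delta/10$ case split combined with monotonicity of the OT-profile and (A1) for the lower bound — is exactly what you do, just inlined rather than packaged in named lemmas. Your remark about the infimum being attained is a fair technical observation, and it applies equally to the paper's own usage of the defining inequality at $\varepsilon^\ast$.
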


Intuitively, if $\tilde\mu$ is only slightly different from $\mu$, i.e., $W_p(\mu, \tilde\mu)\le k\delta$, then the sensitivity of our metric would be similar to that of the $p$-Wasserstein distance. On the other hand, if $\tilde\mu$ is far from $\mu$ (i.e., the $\delta$ fraction of the mass of $\tilde\mu$ that is different from $\mu$ is an outlier noise and disproportionately increases the $p$-Wasserstein distance), then the sensitivity of $(p,k)$-RPW is bounded by $\delta$.

\subsection{Robustness to Sampling Discrepancies.}\label{sec:convergence}
Next, we show that in the $2$-dimensional Euclidean space, the rate of convergence of the empirical $(p,1)$-RPW to the true distance is $\tilde{O}(n^{-\frac{p}{4p-2}})$, which is significantly faster than the convergence rate of $\tilde{O}(n^{-\frac{1}{2p}})$ of the empirical $p$-Wasserstein distance~\cite{fournier2015rate}\footnote{$\tilde{O}()$ hides $\mathrm{poly}(\log n)$ from the convergence rate.}. In particular, for $p=\infty$, the convergence rate of our metric is $\tilde{O}(n^{-\frac{1}{4}})$, whereas the empirical $p$-Wasserstein distance does not converge to the true distance. For simplicity in presentation, we restrict our analysis to $p=2$. Our bounds for any $p\ge 1$ and $d\ge 2$ are stated in Theorem~\ref{cor:convergence}, whose proof is provided in Appendix~\ref{sec:robustness-appendix}.

\begin{lemma}\label{lemma:convergence_2d}
    For any two probability distributions $\mu$ and $\nu$ defined over a metric space with a unit diameter, suppose $\mu_n$ and $\nu_n$ are two empirical distributions of $\mu$ and $\nu$, respectively. Then, with a high probability,
    \begin{equation*}
        |\ours{2,1}{\mu, \nu} - \ours{2,1}{\mu_n, \nu_n}| = \tilde{O}(n^{-\frac{1}{3}}).
    \end{equation*}
\end{lemma}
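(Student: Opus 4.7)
The plan is to invoke the triangle inequality for $(2,1)$-RPW (Theorem~\ref{lemma:metric}):
\[
|\ours{2,1}{\mu, \nu} - \ours{2,1}{\mu_n, \nu_n}|
\le \ours{2,1}{\mu, \mu_n} + \ours{2,1}{\nu, \nu_n},
\]
which reduces the problem to showing $\ours{2,1}{\mu, \mu_n} = \tilde{O}(n^{-1/3})$ with high probability (the bound for $\nu$ being symmetric). By Lemma~\ref{lemma:max_alpha_beta}, it suffices to exhibit, for some $\varepsilon = \tilde{O}(n^{-1/3})$, a $(1-\varepsilon)$-partial transport plan from $\mu$ to $\mu_n$ whose $W_2$ cost is at most $\varepsilon$, since then $W_{2,1-\varepsilon}(\mu,\mu_n) \le \varepsilon$ directly.

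To construct such a plan, I would lay a dyadic grid on the $2$-dimensional domain $\mcX$; at level $k$ there are $4^k$ cells of diameter $O(2^{-k})$. Let $D_k := \sum_{B}|\mu(B)-\mu_n(B)|$, summed over the level-$k$ cells. Because $\mathbb{E}|\mu_n(B)-\mu(B)| = O(\sqrt{\mu(B)/n})$, Cauchy--Schwarz gives $\mathbb{E}[D_k] = O(2^k/\sqrt{n})$; since a single sample perturbs $D_k$ by at most $2/n$, McDiarmid's inequality upgrades this to $D_k = \tilde{O}(2^k/\sqrt{n})$ with high probability, and a union bound across the $O(\log n)$ relevant dyadic scales makes this hold simultaneously.

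I would then build the partial plan level by level, from the fine scale $K_{\max} = \tfrac{1}{3}\log_2 n$ down to the threshold $K_0 = \tfrac{1}{6}\log_2 n$. At the finest level, match $\min(\mu(B),\mu_n(B))$ inside each cell; at every coarser level $k$, run optimal transport within each level-$k$ cell on the residual propagated up from level $k+1$, which matches $(D_{k+1}-D_k)/2$ additional mass at a $W_2^2$ cost of at most $O(4^{-k})$ per unit. Stopping at $K_0$ leaves a residual of $D_{K_0}/2 = \tilde{O}(2^{K_0}/\sqrt{n}) = \tilde{O}(n^{-1/3})$ mass unmatched, while the total $W_2^2$ cost telescopes into a geometric sum $\sum_{k=K_0}^{K_{\max}-1} 4^{-k}\cdot\tilde{O}(2^k/\sqrt{n}) + O(4^{-K_{\max}}) = \tilde{O}(2^{-K_0}/\sqrt{n}) = \tilde{O}(n^{-2/3})$, so the $W_2$ cost of the partial plan is $\tilde{O}(n^{-1/3})$. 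Invoking Lemma~\ref{lemma:max_alpha_beta} with both $\alpha, \beta = \tilde{O}(n^{-1/3})$ gives $\ours{2,1}{\mu,\mu_n} = \tilde{O}(n^{-1/3})$. The choice $2^{K_0} = n^{1/6}$ is the unique scale equalizing the residual-mass bound and the $W_2$-cost bound, and it is exactly what produces the exponent $1/3 = p/(4p-2)$ at $p=2$.

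The main obstacle is establishing the uniform concentration of $D_k$ over all dyadic scales while keeping $\mathrm{poly}\log(n)$ factors absorbed in $\tilde{O}(\cdot)$; the finest scales are the most delicate, since at scales where cells have expected mass near $1/n$ the Bernstein variance term ceases to dominate the deviation term, so the hierarchy must either be truncated at $K_{\max} \le \tfrac{1}{3}\log_2 n$ (as chosen above) or such light cells must be handled separately. A secondary point to verify is that each residual-matching step admits a feasible within-cell transport; this follows because after the previous level's matching, the two residuals restricted to each level-$k$ cell have equal total mass of $\min(r_\mu(C), r_{\mu_n}(C))$ transportable within the cell, and summing these matched amounts across levels gives exactly $1 - D_{K_0}/2$.
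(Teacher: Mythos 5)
Your proposal is correct and follows the same basic strategy as the paper: reduce via the triangle inequality to bounding $\ours{2,1}{\mu,\mu_n}$, then build a hierarchical grid-matching transport plan and invoke Lemma~\ref{lemma:max_alpha_beta}. The main difference is in the hierarchy and the concentration argument. The paper uses $h=O(\log\log n)$ grids whose cell exponents $\alpha_i$ form a geometric progression from roughly $n^{-1/3}$ to $n^{-1/6}$, chosen so that every level contributes the same $\tilde O(n^{-2/3})$ squared cost; you instead use $\Theta(\log n)$ standard dyadic levels on the same range of scales, and the squared cost telescopes to the coarsest scale, also $\tilde O(n^{-2/3})$ — a cleaner choice that avoids hand-tuning the $\alpha_i$. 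For concentration, the paper's Lemma~\ref{lemma:convergence_excess} splits cells into sparse and dense, applies a per-cell Chernoff bound plus a union bound, and finishes with Cauchy--Schwarz; your bounded-differences (McDiarmid) argument directly on the aggregate $D_k$ is simpler and automatically absorbs the light-cell issue you flagged, so your caveat about truncating at $K_{\max}\le\tfrac13\log_2 n$ or handling light cells separately is in fact unnecessary once the McDiarmid route is taken. Both routes deliver the same exponent $n^{-1/3}$.
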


Note that for any pair of distributions $\mu$ and $\nu$ and their empirical distributions $\mu_n$ and $\nu_n$, by the triangle inequality, 
\[|\ours{2,1}{\mu, \nu}-\ours{2,1}{\mu_n, \nu_n}|\le \ours{2,1}{\mu, \mu_n}+\ours{2,1}{\nu, \nu_n}.\]
Therefore, to prove Lemma~\ref{lemma:convergence_2d}, we bound $(2,1)$-RPW of any distribution $\mu$ to its empirical distribution $\mu_n$ in the following lemma.
\begin{lemma}\label{lemma:convergence_2}
    Given a continuous probability distribution $\mu$ in the $2$-dimensional Euclidean space and an empirical distribution $\mu_n$ of $\mu$, 
    $\ours{2,1}{\mu, \mu_n} = \tilde{O}(n^{-\frac{1}{3}})$ with a high probability.
\end{lemma}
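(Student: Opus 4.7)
The plan is to construct an explicit partial transport plan between $\mu$ and $\mu_n$ and then invoke Lemma~\ref{lemma:max_alpha_beta} (with $k=1$) to translate it into an RPW bound. I would partition the support of $\mu$ into axis-aligned cells of side length $\ell$, with $\ell = n^{-1/6}$ chosen to balance the two terms at the end. The candidate partial plan $\gamma$ couples, within each cell $c$, a mass of $m_c := \min\{\mu(c),\mu_n(c)\}$ via the optimal within-cell $W_2$ transport, and leaves the per-cell excess uncoupled. Writing $\alpha := \sum_c \bigl(\mu_n(c)-\mu(c)\bigr)_+$ for the total uncoupled mass and $\beta := w_2(\gamma)$ for the resulting cost, Lemma~\ref{lemma:max_alpha_beta} gives $\ours{2,1}{\mu,\mu_n} \le \max\{\alpha,\beta\}$, so it suffices to arrange $\max\{\alpha,\beta\} = \tilde O(n^{-1/3})$.

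To bound $\alpha$ I would apply Bernstein's inequality per cell and union-bound over the $N = O(1/\ell^2)$ cells to obtain, with high probability, $|\mu_n(c)-\mu(c)| = \tilde O\bigl(\sqrt{\mu(c)/n}\bigr)$ uniformly in $c$ (cells of mass below $\log n/n$ contribute only $\tilde O(1/(\ell^2 n))$ in total and can be absorbed into $\alpha$). Summing and applying Cauchy--Schwarz via $\sum_c\sqrt{\mu(c)} \le \sqrt{N} = 1/\ell$ then yields $\alpha = \tilde O(1/(\ell\sqrt n))$.

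To bound $\beta$ I would invoke the Fournier--Guillin convergence rate at the within-cell scale: conditionally on $n_c$ samples landing in cell $c$, the conditional distribution and its empirical version satisfy $\mathbb{E}[W_2^2] = \tilde O(\ell^2/\sqrt{n_c})$, with the $\ell^2$ factor arising from the cell diameter being $O(\ell)$. The partial plan's cost accumulates as $\beta^2 \le \sum_c m_c \cdot \tilde O(\ell^2/\sqrt{n_c})$; using $n_c \approx n\mu(c)$ together with a second Cauchy--Schwarz gives $\beta^2 = \tilde O\bigl(\ell^2 n^{-1/2}\sum_c\sqrt{\mu(c)}\bigr) = \tilde O(\ell/\sqrt n)$, hence $\beta = \tilde O(\sqrt{\ell}\, n^{-1/4})$. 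Plugging in $\ell = n^{-1/6}$ equalises both bounds at $\tilde O(n^{-1/3})$, finishing the argument. The main technical obstacle I anticipate is upgrading the expectation-based within-cell Fournier--Guillin estimate into a high-probability bound valid across all cells simultaneously; I expect this to be surmountable via a Chebyshev union bound combined with Chernoff-style control on the random cell counts $n_c$, together with separate handling of the low-mass cells already accounted for above.
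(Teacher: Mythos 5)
Your approach is genuinely different from the paper's. The paper builds a fully explicit multi-scale plan: it stacks $O(\log\log n)$ grids $\mcG_1, \ldots, \mcG_h$ with cell side lengths decreasing from roughly $n^{-1/3}$ to roughly $n^{-1/6}$, at each stage transporting whatever mass can be matched within a cell and passing the leftover to the next coarser grid, and controls the per-grid excess with only Chernoff and Cauchy--Schwarz (Lemma~\ref{lemma:convergence_excess}). You instead use a single grid at the outermost scale $\ell = n^{-1/6}$ and invoke Fournier--Guillin to discharge the within-cell $W_2$ cost; this effectively outsources the multi-scale work to the Fournier--Guillin proof. Your bookkeeping is sound: the uncoupled mass $\alpha = \tilde O(1/(\ell\sqrt n))$ agrees with the paper's per-grid excess bound, and $\beta^2 = \tilde O(\ell/\sqrt n)$ follows from summing $m_c\cdot\tilde O(\ell^2 n_c^{-1/2})$ with a Cauchy--Schwarz; balancing at $\ell = n^{-1/6}$ gives $\tilde O(n^{-1/3})$, and the low-mass cells contribute only $\tilde O(n^{-2/3})$. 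The trade-off is self-containedness: the paper's construction only ever bounds within-cell cost by the cell diameter (no rate result needed), so it never has to upgrade an expectation-level estimate to high probability, and the identical construction yields the general $(p,d)$ rates of Lemma~\ref{lemma:convergence} for free.

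The one step that would fail as stated is the proposed ``Chebyshev union bound.'' The Efron--Stein variance of a within-cell $W_2^2$ is of order $\ell^4/n_c$, so Chebyshev gives $\Pr\bigl[W_2^2 - \mathbb{E}[W_2^2] \ge t\bigr] \le \ell^4/(n_c t^2)$; to survive a union bound over $N = \ell^{-2} = n^{1/3}$ cells at failure probability $1/\mathrm{poly}(n)$ forces $t$ to be polynomially larger than the target deviation $\ell^2/\sqrt{n_c}$, which degrades $\beta$ below the plain $W_2$ rate $n^{-1/4}$ and erases the improvement. The correct tool here is McDiarmid's bounded-differences inequality: replacing one of the $n_c$ within-cell samples changes $W_2^2$ by at most $\ell^2/n_c$, giving $W_2^2 \le \mathbb{E}[W_2^2] + C\ell^2\sqrt{(\log n)/n_c}$ with per-cell failure probability $n^{-\Omega(1)}$, so the deviation is only a $\sqrt{\log n}$ factor above the expectation. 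Conditioning on the cell counts $(n_c)_c$ makes cells independent, so the union bound over cells is legitimate; a Chernoff bound then ensures $n_c = \Theta(n\mu(\cell))$ on dense cells as you intend. With that substitution the argument goes through.
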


% In order to prove Lemma~\ref{lemma:convergence_2},
We begin by defining a set of notations that assist in proving Lemma~\ref{lemma:convergence_2}.
Let $\mcG$ be a grid with cell side length $n^{-\alpha}$ inside the unit square. For any cell $\cell\in\mcG$, let $\mu(\cell)$ denote the mass of $\mu$ inside $\cell$. Define the \emph{excess mass} of a cell $\cell$ as $\excess_{\mu}(\cell):=\max\{0, \mu(\cell)-\mu_n(\cell)\}$ and the excess of the grid $\mcG$, denoted by $\excess_{\mu}(\mcG)$, as the total excess of all cells of $\mcG$, i.e., $\excess_{\mu}(\mcG):=\sum_{\cell\in\mcG}\excess_{\mu}(\cell)$. When $\mu$ is clear from context, we simplify notation and denote the excess of $\mcG$ by $\excess(\mcG)$.

\begin{restatable}{lemma}{convergenceexcess}\label{lemma:convergence_excess}
    For any distribution $\mu$ inside the unit square, an empirical distribution $\mu_n$ of $\mu$, and a grid $\mcG$ with cell side length $n^{-\alpha}$, $\excess_\mu(\mcG)= \tilde{O}(n^{\alpha-\frac{1}{2}})$ with a high probability. 
\end{restatable}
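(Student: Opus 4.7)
\textbf{Proof plan for Lemma~\ref{lemma:convergence_excess}.} The plan is a classical concentration-plus-Cauchy--Schwarz argument. Since the grid lies inside the unit square in the plane and has cell side length $n^{-\alpha}$, its cardinality is $N = O(n^{2\alpha})$. For each cell $\cell \in \mcG$, write $p_\cell := \mu(\cell)$ and observe that $n\mu_n(\cell) \sim \mathrm{Binomial}(n, p_\cell)$, so that standard binomial concentration will control $|\mu_n(\cell) - p_\cell|$ for each fixed $\cell$, after which a union bound and a Cauchy--Schwarz aggregation across cells will give the stated bound.

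First I would apply Bernstein's inequality: for any constant $c>0$ there exists $C_c$ such that, for each fixed cell,
\[|\mu_n(\cell) - p_\cell| \;\le\; C_c\Bigl(\sqrt{p_\cell\log n /n}\,+\,\log n /n\Bigr)\]
with probability at least $1 - n^{-c}$. Choosing $c$ strictly larger than $2\alpha$ and union-bounding over the $N = O(n^{2\alpha})$ cells, this inequality holds simultaneously for every $\cell \in \mcG$ with high probability. Bounding $\max\{0, p_\cell - \mu_n(\cell)\}$ by $|p_\cell - \mu_n(\cell)|$ and summing on this event yields
\begin{align*}
\excess_\mu(\mcG) \;&\le\; C_c\sqrt{\log n /n}\sum_{\cell \in \mcG}\sqrt{p_\cell}\,+\,C_c N\log n /n \\
&\le\; C_c\sqrt{\log n /n}\cdot\sqrt{N}\,+\,C_c N\log n /n,
\end{align*}
where the second line uses Cauchy--Schwarz in the form $\sum_\cell \sqrt{p_\cell} \le \sqrt{N\sum_\cell p_\cell} = \sqrt{N}$ (since $\sum_\cell p_\cell \le 1$). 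Substituting $N = O(n^{2\alpha})$ gives $\excess_\mu(\mcG) = \tilde O(n^{\alpha - 1/2}) + \tilde O(n^{2\alpha - 1})$, and for $\alpha \le 1/2$ the first term dominates, yielding the claimed $\tilde O(n^{\alpha - 1/2})$. For $\alpha > 1/2$ the claim is vacuous since $\excess_\mu(\mcG) \le \mu(\mcX) = 1 \le n^{\alpha - 1/2}$.

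There is no deep obstacle: the content of the lemma lies entirely in (i) choosing the right concentration inequality so the per-cell bound has a variance term scaling as $\sqrt{p_\cell}$ rather than a crude $\sqrt{1}$, and (ii) applying Cauchy--Schwarz to exploit $\sum_\cell p_\cell = 1$, which improves $\sum_\cell \sqrt{p_\cell} \le N$ to $\sqrt{N}$ and saves exactly the factor of $n^{\alpha/2}$ needed to reach the exponent $\alpha - 1/2$. The additive $\log n / n$ term in Bernstein is kept so that cells with very small or zero $p_\cell$ (where the Binomial variance is negligible) are still covered; these cells contribute only the lower-order $n^{2\alpha-1}$ term and are therefore harmless in the regime of interest.
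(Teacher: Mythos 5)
Your proof is correct and follows essentially the same route as the paper's: per-cell binomial concentration, union bound over $O(n^{2\alpha})$ cells, then Cauchy--Schwarz against $\sum_\cell p_\cell \le 1$. The one cosmetic difference is that the paper splits cells into sparse ($p_\cell \le 9\log n/n$) and dense, bounding the sparse contribution directly by $p_\cell$ and applying a multiplicative Chernoff bound only to dense cells, whereas your Bernstein inequality folds both regimes into one bound via the additive $\log n/n$ term; the resulting lower-order $\tilde O(n^{2\alpha-1})$ term matches the paper's sparse-cell contribution exactly, so the two arguments are the same calculation written in two standard ways.
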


To better explain our proof for Lemma~\ref{lemma:convergence_2}, we first present a weaker bound of $\tilde{O}(n^{-\frac{3}{10}})$. In Appendix~\ref{sec:convergence_appendix}, we improve our analysis and obtain a rate of $\tilde{O}(n^{-\frac{1}{3}})$. 

{\bf A weaker bound for Lemma~\ref{lemma:convergence_2}.} In the following, we construct a transport plan $\gamma$ that transports all except $\tilde{O}(n^{-\frac{3}{10}})$ mass with a cost of $\tilde{O}(n^{-\frac{3}{10}})$. Having computed such transport plan, we then use Lemma~\ref{lemma:max_alpha_beta} to conclude that $\ours{p,k}{\mu, \nu}=\tilde{O}(n^{-\frac{3}{10}})$. The details are provided next.

Define $\mcG_1$ (resp. $\mcG_2$) to be a grid with cell side length $O(n^{-\alpha_1})$ (resp. $O(n^{-\alpha_2})$) for $\alpha_1:=\frac{3}{10}$ (resp. $\alpha_2:=\frac{1}{5}$). 
The grids $\mcG_1$ and $\mcG_2$ are constructed in a way that their boundaries are aligned with each other.  
Let $\gamma_1$ be a partial transport plan that arbitrarily transports, for any cell $\cell\in\mcG_1$, a mass of $\min\{\mu(\cell), \mu_n(\cell)\}$ from $\mu_n$ to $\mu$. 
Define $\mu^1$ (resp. $\mu_n^1$) to be the distribution of mass of $\mu$ (resp. $\mu_n$) that is not transported by $\gamma_1$. 
Let $\gamma_2$ be a transport plan that transports, for any cell $\cell\in\mcG_2$, a mass of $\min\{\mu^1(\cell), \mu_n^1(\cell)\}$ from $\mu_n^1$ to $\mu^1$. 
Define $\gamma=\gamma_1+\gamma_2$. This completes the construction of $\gamma$. In the appendix, instead of two grids, we consider $O(\log\log n)$ grids and obtain the bound claimed in Lemma~\ref{lemma:convergence_2}.

The transport plan $\gamma$ transports as much mass as possible inside each cell of $\mcG_2$; therefore, the total mass that is not transported by $\gamma$ is equal to the excess of the grid $\mcG_2$, which from Lemma~\ref{lemma:convergence_excess} is
\begin{equation}\label{eq:convergence_proof_0}
    1-\mass{\gamma} = \excess(\mcG_2) = \tilde{O}(n^{\alpha_2-\frac{1}{2}}) = \tilde{O}(n^{-\frac{3}{10}}).
\end{equation}

\begin{figure}
    \centering
    \begin{tabular}{c@{\hskip 1em}c}
        \includegraphics[width=0.45\linewidth]{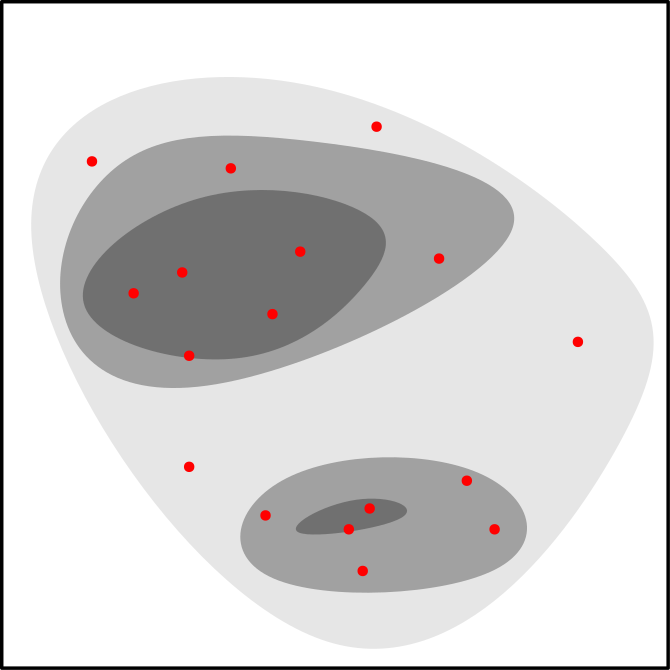} & \includegraphics[width=0.45\linewidth]{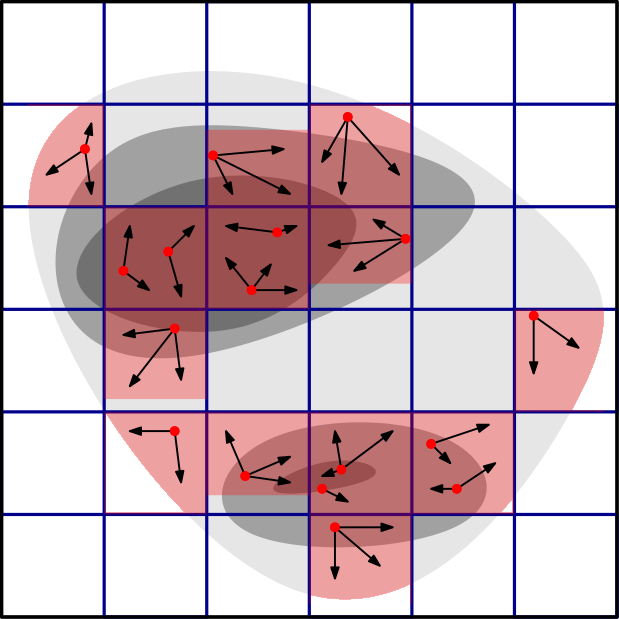} \\
        (a) & (b) \\[5pt]
        \includegraphics[width=0.45\linewidth]{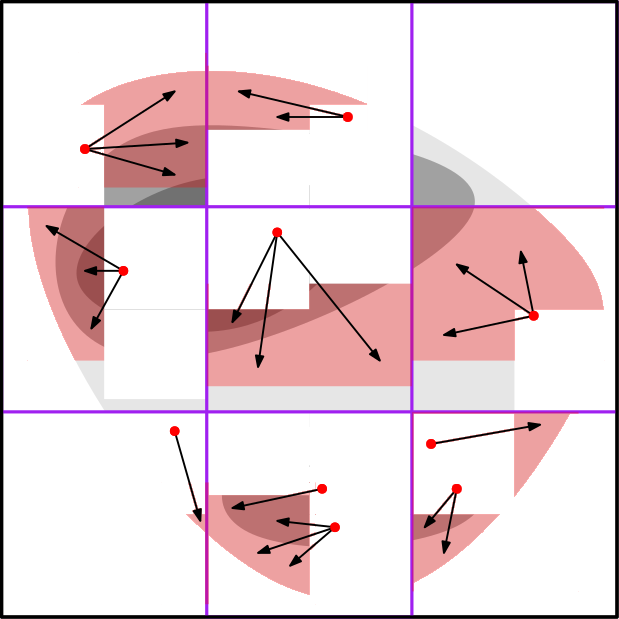} & \includegraphics[width=0.45\linewidth]{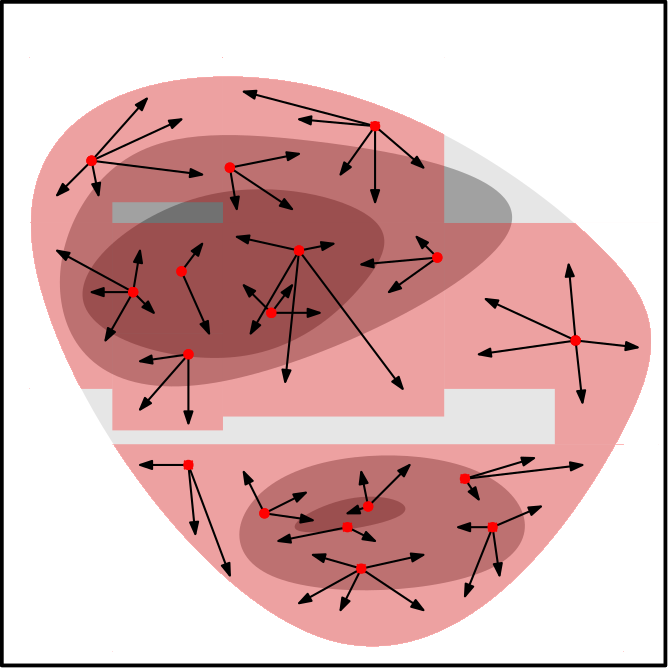} \\
        (c) & (d)
    \end{tabular}\\
    \caption{(a) A distribution $\mu$ (shaded gray area) and an empirical distribution $\mu_n$ (red dots), (b) $\gamma_1$ transports as much mass as possible inside each cell of $\mcG_1$, (c) for the remaining mass, $\gamma_2$ transports as much remaining mass as possible inside the cells of $\mcG_2$, and (d) the transport plan $\gamma$, which is the sum of $\gamma_1$ and $\gamma_2$.}
    \label{fig:convergence}
\end{figure}

Next, we show that the cost $w_2(\gamma)$ of the transport plan $\gamma$ is $\tilde{O}(n^{-\frac{3}{10}})$. Recall that $\gamma=\gamma_1+\gamma_2$. In our analysis, we first show that $w_2^2(\gamma_1)=O(n^{-\frac{3}{5}})$ and then show that $w_2^2(\gamma_2)=O(n^{-\frac{3}{5}})$. Using these two bounds, we then conclude that $w_2^2(\gamma)=O(n^{-\frac{3}{5}})$, or equivalently, $w_2(\gamma)=O(n^{-\frac{3}{10}})$.

Since $\gamma_1$ transports mass between points inside the same cell of $\mcG_1$, all mass transportation in $\gamma_1$ has a squared cost of $O(n^{-2\alpha_1}) = O(n^{-\frac{3}{5}})$ and $w_2^2(\gamma_1) = O(n^{-\frac{3}{5}})$. Furthermore, by Lemma~\ref{lemma:convergence_excess}, with a high probability, the total mass of $\mu^1$ is $\mass{\mu^1}=\tilde{O}(n^{\alpha_1-\frac{1}{2}})=\tilde{O}(n^{-\frac{1}{5}})$. Since the transport plan $\gamma_2$ transports at most $\mass{\mu^1}$ mass, each at a squared cost of $O(n^{-2\alpha_2}) = O(n^{-\frac{2}{5}})$, \[w_2^2(\gamma_2) = \tilde{O}\big(\mass{\mu^1}\times n^{-2\alpha_2}\big)=\tilde{O}(n^{-\frac{3}{5}}).\]
As a result, the cost of $\gamma$ is
\begin{align}
    w_2(\gamma) &= \sqrt{w_2^2(\gamma_1) + w_2^2(\gamma_2)} = \tilde{O}(n^{-\frac{3}{10}}).\label{eq:convergence_proof_1}
\end{align}
Note that $\gamma$ is a transport plan that transports all except $\alpha=\tilde{O}(n^{-\frac{3}{10}})$ mass (Equation~\eqref{eq:convergence_proof_0}) and has cost $\tilde{O}(n^{-\frac{3}{10}})$ (Equation~\eqref{eq:convergence_proof_1}). Hence, \[W_{2, 1-\alpha}(\mu, \mu_n)\le w_2(\gamma)=\tilde{O}(n^{-\frac{3}{10}}).\] Plugging into Lemma~\ref{lemma:max_alpha_beta}, $\ours{2,1}{\mu, \mu_n}=\tilde{O}(n^{-\frac{3}{10}})$.

In the appendix, we use an identical approach to obtain the rate of convergence for any dimension $d\ge1$ and any parameter $p\ge 1$. The following lemma summarizes the results. 

\begin{restatable}{lemma}{convergenceGeneral}\label{lemma:convergence}
    Given a continuous probability distribution $\mu$ in the $d$-dimensional Euclidean space, an empirical distribution $\mu_n$ of $\mu$, and parameters $p\ge 1$ and $k> 0$ constant, with a high probability,
    \begin{equation*}
        \ours{p,k}{\mu, \mu_n} = \begin{cases}
            \tilde{O}(n^{-\frac{1}{d}}), \qquad &p\le \frac{d}{2},\\
            \tilde{O}(n^{-\frac{p}{p(d+2)-d}}),  &p>\frac{d}{2}.
        \end{cases}
    \end{equation*}
\end{restatable}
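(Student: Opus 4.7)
The plan is to split the bound into two regimes matching the statement. When $p \le d/2$, I would bypass the multi-grid construction entirely: setting $\alpha = 0$ and $\beta = W_p(\mu,\mu_n)/k$ in Lemma~\ref{lemma:max_alpha_beta} yields $\ours{p,k}{\mu,\mu_n} \le W_p(\mu,\mu_n)/k$, and for constant $k$ the classical Fournier--Guillin bound gives $W_p(\mu,\mu_n) = \tilde{O}(n^{-1/d})$ with high probability throughout this regime (with a logarithmic factor at the boundary $p = d/2$), which closes the first case.

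For $p > d/2$, the plan is to generalize the two-grid sketch used for Lemma~\ref{lemma:convergence_2} to a chain of $L = O(\log\log n)$ nested grids $\mcG_1, \ldots, \mcG_L$ on $[0,1]^d$ with cell side lengths $s_i = n^{-\alpha_i}$ and $\alpha_1 > \alpha_2 > \cdots > \alpha_L$. First I would lift Lemma~\ref{lemma:convergence_excess} to dimension $d$: a cellwise Bernstein tail bound combined with Cauchy--Schwarz over the roughly $n^{\alpha d}$ cells gives $\excess_\mu(\mcG) = \tilde{O}(n^{(\alpha d - 1)/2})$ with high probability, and a union bound over the $L$ levels only costs polylogarithmic overhead. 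Define $\gamma_i$ to greedily match residual mass within each cell of $\mcG_i$; then the per-unit transport cost is $O(s_i^p) = O(n^{-p\alpha_i})$ and the mass transported in $\gamma_i$ is bounded by the prior-level excess, so $w_p^p(\gamma_i) = \tilde{O}(n^{-p\alpha_i + (\alpha_{i-1} d - 1)/2})$ (with the convention $\excess(\mcG_0) = 1$ for $i=1$), while the unmatched residue after the coarsest grid is $\tilde{O}(n^{(\alpha_L d - 1)/2})$.

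Choosing the target rate $\beta = p/(p(d+2) - d)$, I would set $\alpha_1 = \beta$ and iterate $\alpha_i = \beta + (\alpha_{i-1} d - 1)/(2p)$. By construction this makes every per-level cost exponent equal to $-p\beta$, and since $d/(2p) < 1$ exactly when $p > d/2$, the recursion is a contraction toward the fixed point $\alpha_\infty = (2p\beta - 1)/(2p - d)$. A short algebraic check shows this $\beta$ is precisely the value for which $\alpha_\infty d = 1 - 2\beta$, so the coarsest-grid residue exponent aligns with the per-level cost exponent. Taking $L = \lceil \log\log n / \log(2p/d) \rceil = O(\log\log n)$ pushes $\alpha_L$ within $O(1/\log n)$ of $\alpha_\infty$, which only multiplies the residue bound by an $n^{O(1/\log n)} = O(1)$ factor. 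Summing the $L$ cost terms contributes at most polylog overhead, so both $w_p(\gamma) = \tilde{O}(n^{-\beta})$ and the unmatched mass $= \tilde{O}(n^{-\beta})$, and Lemma~\ref{lemma:max_alpha_beta} applied with $\alpha = \beta = \tilde{O}(n^{-\beta})$ yields $\ours{p,k}{\mu,\mu_n} = \tilde{O}(n^{-\beta})$.

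The main obstacle is the algebra that pins $\beta = p/(p(d+2)-d)$ as the unique exponent simultaneously equalizing the per-level cost rate and the coarsest-grid residue rate, and verifying that the recursion stays in the feasible regime $0 < \alpha_L < \cdots < \alpha_1 < 1$; the inequality $\beta < 1/d$ (equivalent to $p > d/2$) ensures $\alpha_1 > \alpha_\infty > 0$, and the monotone contraction then keeps every $\alpha_i$ in the valid range. The $d$-dimensional excess lemma and the union bound across the $O(\log\log n)$ grids are routine but need to be tracked carefully to obtain the high-probability statement.
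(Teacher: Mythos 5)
Your proposal is correct and follows essentially the same path as the paper's proof: the $p\le d/2$ case is handled via the $\ours{p,k}{\mu,\nu}\le \frac{1}{k}W_p(\mu,\nu)$ bound plus Fournier--Guillin, and the $p>d/2$ case uses the same $O(\log\log n)$-level nested grid construction with the $d$-dimensional excess lemma. The only difference is cosmetic: you parametrize the grid exponents directly in terms of the target rate $\beta=\frac{p}{p(d+2)-d}$ via the recursion $\alpha_i=\beta+(\alpha_{i-1}d-1)/(2p)$, whereas the paper writes the $\alpha_i$ in closed form with an $n$-dependent auxiliary $\beta$; the two parametrizations generate the same sequence of cell sizes and the same final rate.
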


For any pair of distributions $\mu$ and $\nu$ and their empirical distributions $\mu_n$ and $\nu_n$, by the triangle inequality, 
\[|\ours{p,k}{\mu, \nu}-\ours{p,k}{\mu_n, \nu_n}|\le \ours{p,k}{\mu, \mu_n}+\ours{p,k}{\nu, \nu_n}.\]
Combined with Lemma~\ref{lemma:convergence}, we get the following corollary.
\begin{theorem}\label{cor:convergence}
    For two probability distributions $\mu$ and $\nu$ defined over a metric space $(\mcX,c)$ with a unit diameter, suppose $\mu_n$ and $\nu_n$ are two empirical distributions of $\mu$ and $\nu$, respectively. Then, for any $p\ge 1$ and $k> 0$ constant, with a high probability,
    \begin{equation*}
        |\ours{p,k}{\mu, \nu} - \ours{p,k}{\mu_n, \nu_n}| = \begin{cases}
            \tilde{O}(n^{-\frac{1}{d}}), \qquad &p\le \frac{d}{2},\\
            \tilde{O}(n^{-\frac{p}{p(d+2)-d}}),  &p>\frac{d}{2}.
        \end{cases}
    \end{equation*}
\end{theorem}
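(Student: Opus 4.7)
The plan is to obtain Theorem~\ref{cor:convergence} as a direct corollary of Lemma~\ref{lemma:convergence}. By the triangle inequality for the $(p,k)$-RPW metric (Theorem~\ref{lemma:metric}),
\[|\ours{p,k}{\mu,\nu} - \ours{p,k}{\mu_n,\nu_n}| \le \ours{p,k}{\mu,\mu_n} + \ours{p,k}{\nu,\nu_n},\]
and applying Lemma~\ref{lemma:convergence} to each summand on the right, together with a union bound over the two high-probability events, yields the stated rate. The substantive work is therefore the proof of Lemma~\ref{lemma:convergence}, on which I focus next.

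For Lemma~\ref{lemma:convergence}, the plan is to lift the multi-grid argument of Lemma~\ref{lemma:convergence_2} from the $(d,p)=(2,2)$ case to general $d \ge 1$ and $p \ge 1$. The first step is a $d$-dimensional analogue of Lemma~\ref{lemma:convergence_excess}: for a grid $\mcG$ on the unit cube with cell side $n^{-\alpha}$, $\excess_{\mu}(\mcG) = \tilde{O}(n^{(\alpha d - 1)/2})$ with high probability. This follows from a cell-wise Bernstein bound on $|\mu(\cell) - \mu_n(\cell)|$, a union bound over the $n^{\alpha d}$ cells, and Cauchy--Schwarz applied to the per-cell standard deviations $\sqrt{\mu(\cell)/n}$.

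Assuming $p > d/2$, I would then fix scales $\alpha_1 > \alpha_2 > \cdots > \alpha_L > 0$, corresponding nested grids $\mcG_1, \ldots, \mcG_L$ (cell sides $n^{-\alpha_1} < \cdots < n^{-\alpha_L}$), and build a partial transport plan $\gamma = \gamma_1 + \cdots + \gamma_L$ greedily: at stage $i$, within each cell of $\mcG_i$, $\gamma_i$ transports the maximum possible mass between the residual distributions $\mu^{i-1}$ and $\mu_n^{i-1}$. A crucial observation is that each greedy step preserves the signed cell surplus at every coarser scale, i.e., $\mu^{i-1}(\cell) - \mu_n^{i-1}(\cell) = \mu(\cell) - \mu_n(\cell)$ for all $\cell \in \mcG_i$. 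Consequently, the residual mass $m_i$ after stage $i$ equals $\excess_\mu(\mcG_i) = \tilde{O}(n^{(\alpha_i d - 1)/2})$. Since every pair transported at stage $i$ lies within a cell of diameter at most $\sqrt{d}\, n^{-\alpha_i}$, we also obtain $w_p^p(\gamma_i) \le m_{i-1} \cdot (\sqrt{d}\, n^{-\alpha_i})^p$. Summing the stage costs, taking $p$-th roots, and applying Lemma~\ref{lemma:max_alpha_beta} with $\alpha = m_L$ and $k\beta = w_p(\gamma)$ bounds $\ours{p,k}{\mu,\mu_n}$ by $\max(m_L, w_p(\gamma)/k)$.

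The main technical obstacle is choosing the $L$ scales to balance all contributions. Equating the exponents of successive cost terms produces the recursion $\alpha_{i+1} - \alpha_i = (d/2p)(\alpha_i - \alpha_{i-1})$, a geometric progression with common ratio $r = d/(2p)$. For $p > d/2$ we have $r < 1$, so after $L = O(\log\log n)$ stages the scales converge (up to log factors) to a fixed point $\alpha^* = \alpha_1 + (\alpha_1 d - 1)/(2p - d)$; solving for the base scale $\alpha_1$ at which the final residual $n^{(\alpha^* d - 1)/2}$ matches the cost $n^{-\alpha_1}$ yields $\alpha_1 = p/(p(d+2) - d)$ and hence the claimed rate $\tilde{O}(n^{-p/(p(d+2)-d)})$. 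For the complementary regime $p \le d/2$ the recursion fails to converge and I would abandon the multi-grid construction entirely: take $\alpha = 0$ in Lemma~\ref{lemma:max_alpha_beta} and invoke the classical empirical $p$-Wasserstein rate $W_p(\mu,\mu_n) = \tilde{O}(n^{-1/d})$~\cite{fournier2015rate} to obtain $\ours{p,k}{\mu,\mu_n} \le W_p(\mu,\mu_n)/k = \tilde{O}(n^{-1/d})$.
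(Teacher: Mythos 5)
Your proposal is correct and follows essentially the same route as the paper: triangle inequality plus Lemma~\ref{lemma:convergence}, a $d$-dimensional excess lemma, the iterated $O(\log\log n)$-grid greedy transport plan balanced via Lemma~\ref{lemma:max_alpha_beta} for $p>d/2$, and falling back to the classical empirical $p$-Wasserstein rate for $p\le d/2$. Your derivation of the scale sequence via the geometric recursion $\alpha_{i+1}-\alpha_i = (d/2p)(\alpha_i-\alpha_{i-1})$ with $\alpha_0=1/d$ recovers exactly the paper's explicit choice of $\alpha_i$, and your explicit "signed cell surplus is preserved" observation is a helpful articulation of a step the paper leaves implicit.
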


\paragraph{Extension to arbitrary diameter.} We can extend our distance and its properties to the case where the diameter of the support is bounded by any $\Delta > 0$ as follows.  Define $(p,k)$-RPW between distributions $\mu$ and $\nu$ to be the minimum value $\varepsilon>0$ such that $W_{p, 1-\varepsilon}(\mu, \nu) \le k\Delta\varepsilon$.  In this case, the metric property (Theorem~\ref{lemma:metric}) and the robustness to outliers (Theorem~\ref{lemma:distortion}) holds without any changes, and given that the diameter $\Delta$ is a constant, our bounds for the convergence rate of the empirical $(p,k)$-RPW also holds as stated in Theorem~\ref{cor:convergence}.

\section{Relation to Other Distances}\label{sec:relation}
In this section, we discuss the relation of the $(p,k)$-RPW metric with three other well-known distance functions, namely (i) \levy distance, (ii) total variation, and (iii) $p$-Wasserstein distance. 
In particular, we first show that the $(\infty, 1)$-RPW is the same as the \levy distance. We next show that for any $p\ge 1$, the $(p,k)$-RPW metric is an interpolation between total variation and the $p$-Wasserstein distance. More precisely, $\ours{p,0}{\cdot, \cdot}$ is the same as the total variation distance, and for large values of $k$, the $(p,k)$-RPW will be close to $\frac{1}{k}W_p(\cdot, \cdot)$.

\paragraph{\levy distance.} For any two distributions $\mu$ and $\nu$ defined over a set $\mcX$ in the $d$-dimensional Euclidean space, let $\lp{\mu, \nu}$ denote the \levy distance of $\mu$ and $\nu$. In the following lemma, we show that the $(\infty, 1)$-RPW metric is equal to the \levy distance. The proof of this lemma, which is provided in Appendix~\ref{sec:relation-appendix}, is similar to the approach described by~\citet{lahn2021faster}.

\begin{restatable}{lemma}{relationLevyy}\label{lemma:relationLevy}
    For any pair of probability distributions $\mu$ and $\nu$ in a metric space $(\mcX, c)$ with a unit diameter, $\ours{\infty,1}{\mu, \nu}= \lp{\mu, \nu}$.
\end{restatable}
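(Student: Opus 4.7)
The plan is to establish the equivalence $\ours{\infty,1}{\mu,\nu} \le \varepsilon \iff \lp{\mu,\nu} \le \varepsilon$ for every $\varepsilon \in [0,1]$, from which the desired equality of infima follows immediately. The workhorse is Strassen's theorem, which characterizes $\lp{\mu,\nu} \le \varepsilon$ as the existence of a coupling $\gamma$ of $\mu$ and $\nu$ with $\gamma\{(x,y) : c(x,y) > \varepsilon\} \le \varepsilon$. Given this characterization, each direction of the equivalence becomes a short translation between a full coupling with a small amount of ``bad'' mass and a partial transport plan whose entire support lies within the $\varepsilon$-diagonal $\{c(x,y) \le \varepsilon\}$.

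For the forward direction, suppose $\ours{\infty,1}{\mu,\nu} \le \varepsilon$. By the definition in Equation~\eqref{eq:ours}, $W_{\infty,1-\varepsilon}(\mu,\nu) \le \varepsilon$, so there exists a partial transport plan $\gamma_0$ of total mass $1-\varepsilon$ whose support is contained in $\{(x,y) : c(x,y) \le \varepsilon\}$. Let $\mu_0$ and $\nu_0$ denote its marginals, and let $\tilde\mu := \mu - \mu_0$, $\tilde\nu := \nu - \nu_0$ be the residual measures, each of total mass $\varepsilon$. Form any coupling $\tilde\gamma$ of $\tilde\mu$ and $\tilde\nu$ (for instance, $\tfrac{1}{\varepsilon}\tilde\mu \otimes \tilde\nu$) and set $\gamma := \gamma_0 + \tilde\gamma$. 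Then $\gamma$ is a coupling of $\mu$ and $\nu$, and $\gamma\{(x,y) : c(x,y) > \varepsilon\} \le \mass{\tilde\gamma} = \varepsilon$. Strassen's theorem gives $\lp{\mu,\nu} \le \varepsilon$.

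For the reverse direction, suppose $\lp{\mu,\nu} \le \varepsilon$. Strassen's theorem furnishes a coupling $\gamma$ of $\mu$ and $\nu$ such that $\gamma(B) \le \varepsilon$, where $B := \{(x,y) : c(x,y) > \varepsilon\}$. Restrict $\gamma$ to the complement of $B$ to obtain a partial transport plan $\gamma_0 := \gamma|_{B^c}$ whose support is contained in $\{c(x,y) \le \varepsilon\}$, so that $w_\infty(\gamma_0) \le \varepsilon$. Since $\mass{\gamma_0} = 1 - \gamma(B) \ge 1-\varepsilon$, the marginals of $\gamma_0$ are dominated by $\mu$ and $\nu$, respectively, and any residual mass (of total amount at most $\varepsilon$) can be added back using arbitrary assignments without affecting $w_\infty$; one obtains $W_{\infty,1-\varepsilon}(\mu,\nu) \le \varepsilon$, and hence $\ours{\infty,1}{\mu,\nu} \le \varepsilon$.

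Finally, taking the infimum over $\varepsilon$ in both directions and appealing to the lower semicontinuity of both functionals in $\varepsilon$ (so the infima are attained or approached cleanly), we conclude $\ours{\infty,1}{\mu,\nu} = \lp{\mu,\nu}$. The main subtlety is invoking Strassen's theorem cleanly for the metric space $(\mcX, c)$; this is standard for Polish spaces and applies directly here since $\mcX$ is a compact metric space. A secondary care point is the careful separation of the ``well-matched'' part of a coupling from its residual, and the fact that the distinction between strict ``$>$'' and non-strict ``$\ge$'' in the defining conditions washes out after taking infima.
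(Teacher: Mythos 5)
Your proof is correct, but it takes a genuinely different route from the paper's. The paper first proves the claim for \emph{discrete} distributions by a combinatorial argument: it builds a $\delta$-disc bipartite graph on the supports, adds a dummy vertex carrying mass $\delta$, and invokes a fractional Hall's-theorem extension to extract a partial transport plan of mass at least $1-\delta$ supported on edges of cost at most $\delta$ (and, conversely, reads off the \levy neighborhood condition from any $(1-\delta)$-partial OT plan with $w_\infty \le \delta$). It then passes to arbitrary distributions by discretizing on a fine grid, controlling the $(\infty,1)$-RPW and \levy distances to the discretizations via Lemma~\ref{lemma:max_alpha_beta} and the triangle inequality, and letting the grid width go to zero. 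You instead invoke Strassen's theorem as a black box, which characterizes $\lp{\mu,\nu}\le\varepsilon$ by the existence of a coupling with off-diagonal mass at most $\varepsilon$, and then the translation to/from $(1-\varepsilon)$-partial plans supported on $\{c\le\varepsilon\}$ is immediate in both directions with no need for discretization. Since Strassen's theorem is precisely the measure-theoretic generalization of the Hall-type duality the paper uses in the discrete case, the two proofs are conceptually close; your version is shorter and handles the continuous case in one stroke, while the paper's is more elementary and self-contained (it effectively reproves the needed special case of Strassen rather than citing it). The technical points you flag — left-continuity of the OT-profile so that the infimum in Equation~\eqref{eq:ours} behaves well, and strict-vs-nonstrict inequalities in Strassen — do wash out exactly as you say, and are worth a sentence in a final write-up, but they do not constitute a gap.
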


\paragraph{Total Variation.} For any pair of distributions $\mu$ and $\nu$, let $\|\mu-\nu\|_{\mathrm{TV}}$ denote the total variation of $\mu$ and $\nu$. In Lemma~\ref{lemma:relationTV}, we show that for any $p\ge 1$, the $(p,0)$-RPW distance between $\mu$ and $\nu$ is equal to their total variation.  
Intuitively, the $(p,0)$-RPW distance measures the maximum amount of mass that can be transported from $\mu$ to $\nu$ at a $0$ cost, i.e., $(p,0)$-RPW distance is the amount of mass of $\mu$ and $\nu$ that overlap, which is the same as their total variation.

\begin{restatable}{lemma}{relationTV}\label{lemma:relationTV}
    For any two probability distributions $\mu$ and $\nu$ in a metric space $(\mcX, c)$ with a unit diameter and any parameter $p\ge 1$, $\ours{p,0}{\mu, \nu}=\|\mu-\nu\|_{\mathrm{TV}}$.
\end{restatable}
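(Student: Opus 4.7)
The plan is to unpack the definition and reduce the statement to a mass-overlap calculation. Since $k=0$, the condition $W_{p,1-\varepsilon}(\mu,\nu) \le k\varepsilon$ becomes $W_{p,1-\varepsilon}(\mu,\nu) = 0$, because the partial Wasserstein cost is non-negative. Hence $\ours{p,0}{\mu,\nu}$ is the infimum of $\varepsilon$ for which there exists a sub-coupling of $\mu$ and $\nu$ of mass $1-\varepsilon$ whose $p$-Wasserstein cost vanishes. Because $(\mcX,c)$ is a metric, $c(x,y)^p = 0$ if and only if $x=y$, so a sub-coupling $\gamma$ has zero cost exactly when $\gamma$ is concentrated on the diagonal $\{(x,x) : x\in \mcX\}$. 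The task therefore reduces to showing that the maximum mass of a diagonal-supported sub-coupling of $\mu$ and $\nu$ is precisely $1 - \|\mu-\nu\|_{\mathrm{TV}}$.

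For the upper bound $\ours{p,0}{\mu,\nu} \le \|\mu-\nu\|_{\mathrm{TV}}$, I would exhibit such a sub-coupling explicitly. Let $\eta := \mu \wedge \nu$ be the largest measure dominated by both $\mu$ and $\nu$, well-defined via the Radon--Nikodym densities with respect to $\mu+\nu$; the standard identity gives $\mass{\eta} = 1 - \|\mu-\nu\|_{\mathrm{TV}}$. Pushing $\eta$ forward under the diagonal map $x\mapsto (x,x)$ yields a measure on $\mcX\times \mcX$ whose two marginals are both $\eta$; since $\eta \le \mu$ and $\eta \le \nu$, this is a valid sub-coupling of mass $1-\|\mu-\nu\|_{\mathrm{TV}}$ and zero cost. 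Hence $W_{p, 1-\|\mu-\nu\|_{\mathrm{TV}}}(\mu,\nu) = 0$, so $\varepsilon = \|\mu-\nu\|_{\mathrm{TV}}$ is admissible in the infimum.

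For the lower bound $\ours{p,0}{\mu,\nu} \ge \|\mu-\nu\|_{\mathrm{TV}}$, take any sub-coupling $\gamma$ concentrated on the diagonal and let $\eta$ be its common pushforward under either coordinate projection (the two coincide by the diagonal support). Since $\eta \le \mu$ and $\eta \le \nu$ as marginals of $\gamma$, we have $\eta \le \mu \wedge \nu$, and therefore $\mass{\gamma} = \mass{\eta} \le 1 - \|\mu-\nu\|_{\mathrm{TV}}$. Consequently, $W_{p,1-\varepsilon}(\mu,\nu) = 0$ forces $1-\varepsilon \le 1-\|\mu-\nu\|_{\mathrm{TV}}$, i.e., $\varepsilon \ge \|\mu-\nu\|_{\mathrm{TV}}$, which combined with the upper bound yields equality.

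There is essentially no substantive obstacle: the whole proof is the observation that zero ground-cost transport must preserve locations, and the maximum mass-preserving transport between $\mu$ and $\nu$ equals their overlap $\mu\wedge\nu$. The only mild care needed is in the measure-theoretic setup, ensuring $\mu\wedge\nu$ is defined in full generality and that the diagonal pushforward is a genuine sub-coupling with the correct marginals; both are standard consequences of the Radon--Nikodym theorem applied with respect to $\mu+\nu$.
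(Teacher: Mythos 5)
Your proof is correct, but it takes a genuinely different route from the paper's. The paper sets $\delta = \|\mu - \nu\|_{\mathrm{TV}}$, invokes the characterization of Nietert et al.\ that
\[
W_{p,1-\varepsilon}(\mu,\nu) = \inf_{\mu' : \|\mu - \mu'\|_{\mathrm{TV}} \le \varepsilon} W_p(\mu',\nu),
\]
applies Lemma~\ref{lemma:max_alpha_beta} for the upper bound, and for the lower bound extracts from that infimum a minimizer $\mu^*$ with $W_p(\mu^*,\nu) = 0$, hence $\mu^* = \nu$ by the metric property of $W_p$. You instead work directly with sub-couplings: zero $p$-cost forces diagonal support, and the maximal diagonal sub-coupling has mass $\mass{\mu \wedge \nu} = 1 - \|\mu-\nu\|_{\mathrm{TV}}$. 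Your argument is more elementary and self-contained---it needs neither the Nietert et al.\ reformulation nor Lemma~\ref{lemma:max_alpha_beta}---at the small price of having to note (as you implicitly do) that $W_{p,\alpha}(\mu,\nu)$ is realized by an actual sub-coupling, which follows from weak* compactness on a compact $\mcX$. Both proofs are clean; yours reveals more clearly \emph{why} the identity holds, namely that $k=0$ reduces partial transport to computing the overlap measure, which is the combinatorial content underlying the TV distance.
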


\paragraph{$p$-Wasserstein distance.} Finally, we show that for large enough values of $k$, the $(p,k)$-RPW metric would be close to $\frac{1}{k}W_p(\mu, \nu)$.

\begin{restatable}{lemma}{relationWasserstein}\label{lemma:relationWasserstein}
For any two probability distributions $\mu$ and $\nu$ over a metric space $(\mcX, c)$ with a unit diameter and any parameters $p\ge 1$ and $k>0$, $\ours{p,k}{\mu, \nu} \le \frac{1}{k}W_p(\mu, \nu)\le\ours{p,k}{\mu, \nu} + k^{-\frac{p+1}{p}}$.
\end{restatable}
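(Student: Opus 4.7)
The plan is to prove the two inequalities in Lemma~\ref{lemma:relationWasserstein} separately by manipulating (partial) optimal transport plans directly.

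For the lower bound $\ours{p,k}{\mu,\nu} \le \frac{1}{k}W_p(\mu,\nu)$, I would first observe that the partial $p$-Wasserstein cost is non-increasing in the transported mass: scaling an optimal full plan by a factor $1-\varepsilon$ yields an admissible $(1-\varepsilon)$-partial plan, so $W_{p,1-\varepsilon}(\mu,\nu) \le W_p(\mu,\nu)$ for every $\varepsilon\in[0,1]$. Taking $\varepsilon_0 = \min\{1,\,W_p(\mu,\nu)/k\}$ then gives $W_{p,1-\varepsilon_0}(\mu,\nu) \le W_p(\mu,\nu) \le k\varepsilon_0$, so $\varepsilon_0$ lies in the defining set of $\ours{p,k}{\mu,\nu}$ and the infimum is bounded by $W_p(\mu,\nu)/k$.

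For the upper bound $\frac{1}{k}W_p(\mu,\nu) \le \ours{p,k}{\mu,\nu} + k^{-(p+1)/p}$, set $\varepsilon^* = \ours{p,k}{\mu,\nu}$. The key step is to take a (near-)optimal $(1-\varepsilon^*)$-partial plan $\gamma^*$ whose $p$-th power cost is at most $(k\varepsilon^*)^p$, and extend it to a full transport plan by coupling the leftover $\varepsilon^*$ mass on each side arbitrarily. Since every pair in $\mcX\times\mcX$ has cost at most $1$ by the unit diameter assumption, this extension contributes at most $\varepsilon^*\cdot 1^p = \varepsilon^*$ to the $p$-th power of the total cost, yielding
\[W_p(\mu,\nu)^p \le (k\varepsilon^*)^p + \varepsilon^*.\]
I would then invoke the elementary subadditivity $(a+b)^{1/p} \le a^{1/p} + b^{1/p}$, valid for $p\ge 1$ and $a,b\ge 0$, to get $W_p(\mu,\nu) \le k\varepsilon^* + (\varepsilon^*)^{1/p}$.

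To turn the residual $(\varepsilon^*)^{1/p}$ into the clean $k^{-(p+1)/p}$ factor, I would do a small case split. If $\varepsilon^* \le 1/k$, then $(\varepsilon^*)^{1/p} \le k^{-1/p}$, and dividing the above by $k$ yields $\frac{1}{k}W_p(\mu,\nu) \le \varepsilon^* + k^{-(p+1)/p}$, as required. If instead $\varepsilon^* > 1/k$, the unit diameter bound $W_p(\mu,\nu)\le 1$ gives $\frac{1}{k}W_p(\mu,\nu) \le \frac{1}{k} < \varepsilon^* \le \varepsilon^* + k^{-(p+1)/p}$, and the bound holds trivially. The main obstacle I anticipate is a minor technicality: justifying that the $(1-\varepsilon^*)$-partial plan used above really exists, i.e., that the infimum defining $\ours{p,k}{\mu,\nu}$ is attained. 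This follows from continuity and monotonicity of $\varepsilon\mapsto W_{p,1-\varepsilon}(\mu,\nu)$ (the OT profile), but it can also be bypassed entirely by using $\varepsilon^* + \eta$ in place of $\varepsilon^*$ throughout and letting $\eta\to 0^+$ at the end.
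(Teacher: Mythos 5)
Your proof is correct and follows essentially the same route as the paper's: the lower bound plugs $\varepsilon_0=\min\{1,W_p/k\}$ into the defining set (the paper phrases this via Lemma~\ref{lemma:max_alpha_beta}), and the upper bound extends a $(1-\varepsilon^*)$-partial plan by transporting the leftover $\varepsilon^*$ mass at unit diameter cost, giving $W_p^p\le(k\varepsilon^*)^p+\varepsilon^*$ and then applying subadditivity of $x\mapsto x^{1/p}$. The only cosmetic differences are that the paper case-splits on $k\le 1$ versus $k>1$ up front (proving $\varepsilon^*\le 1/k$ when $k>1$ via Lemma~\ref{lemma:max_alpha_beta}) while you case-split on $\varepsilon^*$ versus $1/k$ at the end, and you explicitly flag the attainment-of-infimum technicality, which the paper leaves implicit.
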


\section{Algorithms to Compute \texorpdfstring{$(p,k)$}{}-RPW}\label{sec:approx}
In this section, we describe two approximation algorithms for computing the $(p,k)$-RPW distance between two discrete distributions defined over supports of $n$ points. The first algorithm uses a binary search on the value of $\ours{p,k}{\mu, \nu}$ and computes a $\delta$-additive approximation of $(p,k)$-RPW (or simply \emph{$\delta$-close $(p,k)$-RPW}) for any $\delta>0$ in $O(n^3\log n\log\delta^{-1})$ time. Our second algorithm relies on the algorithm by \cite{our-neurips-2019-otapprox} (LMR algorithm) to approximate the OT-profile and computes a $\delta$-additive approximation of our metric in $O(\frac{n^{2}}{\delta^p} + \frac{n}{\delta^{2p}})$ time. A high-level overview of each algorithm is presented below. See Appendix~\ref{sec:approx-ap} for full details.  

Note that when $k=0$, as discussed in Lemma~\ref{lemma:relationTV}, the $(p,0)$-RPW is simply the total variation distance and can be computed on discrete distributions in linear time. Hence, in the following, we assume $k>0$.

{\bf Highly-Accurate Algorithm.} For any $\varepsilon\in[0,1]$, computing the $(1-\varepsilon)$-partial $p$-Wasserstein distance for discrete distributions can be done using a standard OT solver in an augmented space~\cite{chapel2020partial}, which takes $O(n^3\log n)$ time~\cite{edmonds1972theoretical, orlin1988faster}. Our first algorithm is based on this observation and uses a simple binary search on the value of $(p,k)$-RPW to obtain a $\delta$-additive approximation in $O(n^3\log n\log\delta^{-1})$ time.

{\bf Computing Through an Approximate OT-Profile.} We can also approximate the $(p,k)$-RPW distance by using the LMR algorithm \cite{our-neurips-2019-otapprox} to approximate the OT-profile. Given two discrete probability distributions, an error parameter $\delta'>0$, and any cost function, the LMR algorithm incrementally constructs a $(\delta')^{1/p}$-additive approximation of the OT-profile, i.e., for any $\alpha\in[0,1]$, the LMR algorithm computes a $(\delta')^{1/p}$-additive approximation of the $\alpha$-partial $p$-Wasserstein distance~\cite{phatak2022computing}. 

In Lemma~\ref{lemma:approx} in the appendix, we show that computing our metric using a $\delta'$-additive approximation of the OT-profile leads to a $\frac{2\delta'}{k}$-close $(p,k)$-RPW. Therefore, to compute a $\delta$-close $(p,k)$-RPW distance function, we use the LMR algorithm with an error parameter $\delta'=(\frac{k\delta}{2})^p$ to approximate the OT-profile and to compute a $\delta$-close $\ours{p,k}{\mu, \nu}$ in $O(\frac{n^{2}}{\delta^p} + \frac{n}{\delta^{2p}})$ time.

\section{Experimental Results}
\label{sec:experimental}
In this section, we present the results of our experiments showing that our distance is robust to noise from outliers and sample discrepancies. 

In our first experiment, we use the $1$-Wasserstein distance, $2$-Wasserstein distance, TV distance, $(2,1)$-RPW, and $(2,0.1)$-RPW to rank images from the MNIST~\cite{lecun1998mnist}, CIFAR-10~\cite{hinton2012improving}, and COREL datasets and measure the accuracy of the results.  In our second experiment, we measure the convergence rate of empirical $(2,k)$-RPW distance to the true $(2,k)$-RPW and compare it with the convergence rate for $2$-Wasserstein distance for synthetic data sets. 
For both experiments, we compute an additive approximation of the RPW metric using the LMR algorithm~\cite{our-neurips-2019-otapprox}.

\subsection{Image Retrieval.}
Following the experimental setup introduced by \citet{rubner2000earth}, we conduct experiments on retrieving images using $(2,1)$-RPW and $(2,0.1)$-RPW distances and compare their accuracy against the $1$-Wasserstein, $2$-Wasserstein, and TV distances. 
In this experiment, given a dataset of labeled images and a set of unlabeled query images, the goal is to retrieve, for each query image, a set of $m$ similar images from the labeled dataset. The accuracy of a distance function in the image retrieval task is then computed as the ratio of the retrieved images with the correct label, averaged over all retrievals for all query images. In our experiments, we vary the value of $m$ from $1$ to $100$. 

{\bf Datasets.} We conduct the experiments using three datasets, namely, MNIST, CIFAR-10, and COREL. In our experiments on each dataset, we randomly select $2k$ images as the labeled dataset and randomly select $50$ images as the query. For each dataset, we introduce three scenarios of perturbation: 
\begin{itemize}
    \item[(i)] (noise in datasets) In this scenario, we add random noise to the images in the datasets. For the MNIST dataset, for each image, we add a random amount of noise between $0\%$ and $10\%$ to a random pixel in the image. For CIFAR-10 and COREL datasets, we replaced randomly selected $10\%$ of pixels with white pixels in each image.
    \item[(ii)] (shift in datasets) In this scenario, we shift up each labeled image by $2$ pixels for the MNIST dataset and increase the intensity of a random RGB channel by $20$ in each image of the CIFAR-10 and COREL datasets.
    \item[(iii)] (noise and shift in datasets) In the last scenario, we introduce both random noise (scenario (i)) and random shift (scenario (ii)) to the images in datasets.
\end{itemize}

{\bf Results.} Figure~\ref{fig:experimental-results-IR-mnist} shows the results of our experiments on MNIST and CIFAR-10 datasets. The results of our experiments on the COREL dataset are provided in Appendix~\ref{sec:appendix-exp}. In Figure~\ref{fig:experimental-results-IR-mnist}, the left (resp. right) column corresponds to our experiments on the MNIST (resp. CIFAR-10) dataset in the three scenarios described above. In each plot, the horizontal axis is the number $m$ of retrieved images for each query, and the vertical axis is the accuracy of the retrieved images. 
The results of our experiments suggest that our metric performs better than the $1$-Wasserstein, $2$-Wasserstein, and the TV distances for the task of image retrieval with perturbations. 

\begin{figure}[t]
\centering
\begin{tabular}{c@{\hskip 5pt}c}
     \includegraphics[width=.45\linewidth]{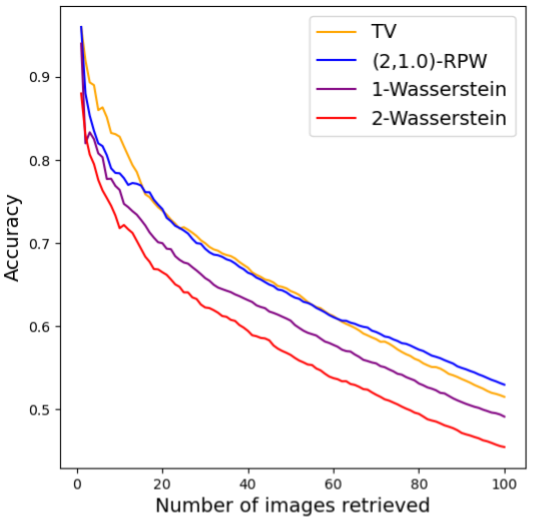} &  \includegraphics[width=.45\linewidth]{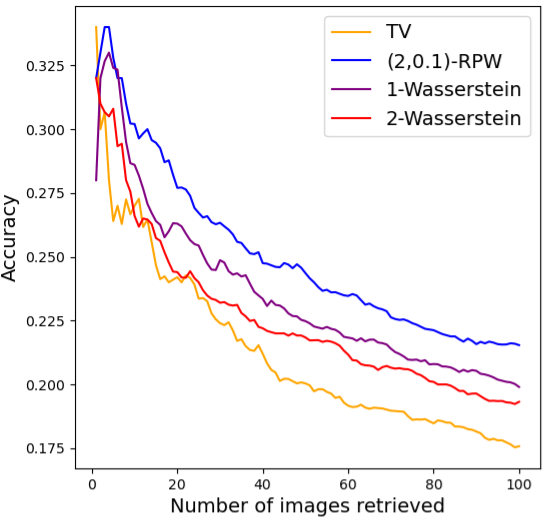}\\
     \multicolumn{2}{c}{\footnotesize (i) Noise in datasets}\\[10pt]
     \includegraphics[width=.45\linewidth]{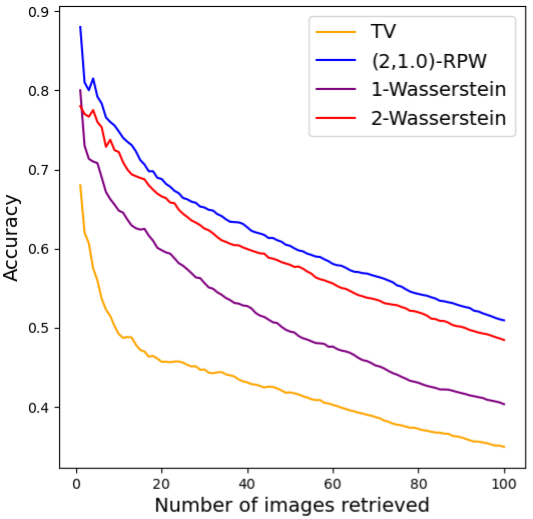} &  \includegraphics[width=.45\linewidth]{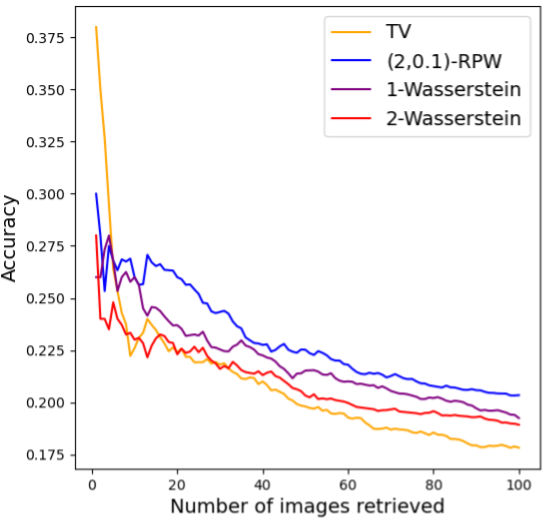}\\
     \multicolumn{2}{c}{\footnotesize (ii) Shift in datasets}\\[10pt]
     \includegraphics[width=.45\linewidth]{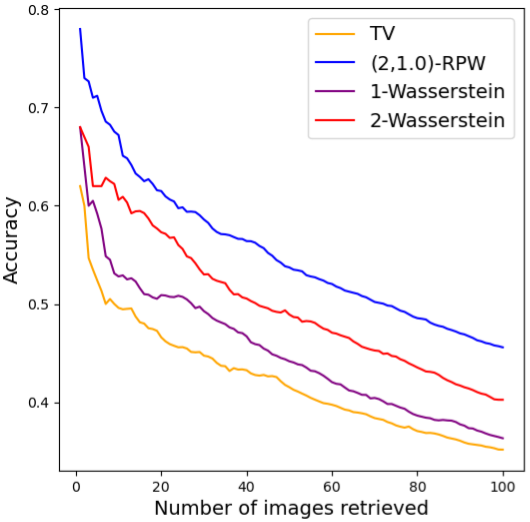} &  \includegraphics[width=.45\linewidth]{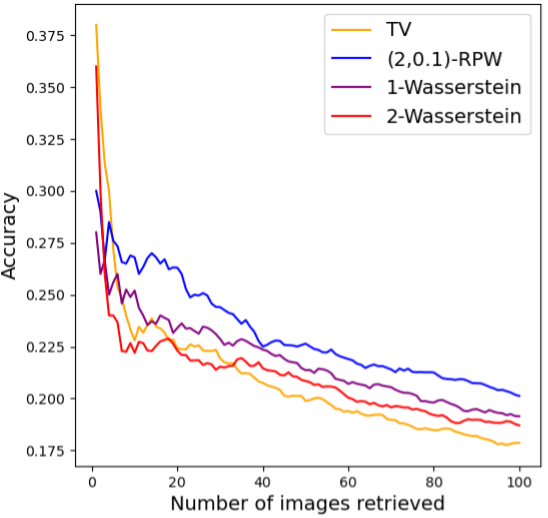}\\
     \multicolumn{2}{c}{\footnotesize (iii) Noise and shift in datasets}\\[8pt]
\end{tabular}
\vspace{-1em}
\caption{The results of our experiments on image retrieval on (left column) MNIST dataset and (right column) CIFAR-10 dataset.} 
\label{fig:experimental-results-IR-mnist}
\end{figure}

First, for experiments on the MNIST dataset, we observe that for datasets (ii) and (iii), the accuracy achieved by the $2$-Wasserstein distance is better than that of the $1$-Wasserstein distance. This, we believe is due to the higher sensitivity of $2$-Wasserstein distance, which enables it to detect minor differences effectively. However, for dataset (i), the presence of noise distorts $2$-Wasserstein distance, causing it to underperform. 

{\it Results for MNIST dataset (i):} The TV, $1$-Wasserstein, and $(2,1)$-RPW distances are more robust to noise and therefore, perform better than the $2$-Wasserstein distance. The $(2,1)$-RPW distance outperforms $1$-Wasserstein distance. This is because it retains the higher sensitivity of $2$-Wasserstein distance, which enables it to effectively detect minor differences.

{\it Results for MNIST dataset (ii):} TV distance is known to be sensitive to shifts. As a result, the accuracy of TV distance drops significantly. In contrast, the $2$-Wasserstein distance and the $(2,1)$-RPW distance handle such shifts more effectively. 

{\it Results for MNIST dataset (iii):} The $2$-Wasserstein distance is sensitive to noise and the TV distance is sensitive to shifts. Therefore, both these distances produce lower accuracy results. Recollect that the $(2,1)$-RPW distance combines the TV distance and the $2$-Wasserstein distance and therefore, can handle both shifts and noise effectively. Therefore, it outperforms all distances in this setting.

{\it Results for CIFAR-10 dataset: } In the CIFAR-10 dataset, images that have the same labels may already have significant variations and shifts. Due to these variations, $2$-Wasserstein and TV distances achieve lower accuracy in comparison to the $1$-Wasserstein distance. The $(2,0.1)$-RPW distance, however, outperforms the $1$-Wasserstein distance for this dataset.

\subsection{Rate of Convergence}
We conduct numerical experiments to compare the convergence rate of the empirical $(2,k)$-RPW metric with that of the $2$-Wasserstein distance and TV distance on discrete distributions. 
We compute the convergence rate of each metric by drawing two sets of $n$ i.i.d samples from a discrete distribution and compute the empirical distance between the corresponding empirical distributions.

{\bf Datasets.} We employ two synthetic $2$-dimensional discrete distributions, namely (i) (2-point distribution) a discrete distribution defined over $2$ points $a$ and $b$ each with a probability $1/2$, where $\|a-b\|=1$, and (ii) (grid distribution) a discrete distribution defined over $16$ points that are placed in a grid of $4\times 4$, where each point has a probability of $\frac{1}{16}$.

In our experiments, we vary the sample size $n$ from $10$ to $10^6$. For each value of $n$, we conduct the experiment $10$ times and take the mean distance among all $10$ executions.

{\bf Results.} As shown in Figure~\ref{fig:converge_rate}, experiments suggest that for both distributions, the empirical $(2,1)$-RPW distance converges to $0$ significantly faster than the $2$-Wasserstein distance. We also observe that for a small value of $k$ (e.g., $k=0.1$), $(2,k)$-RPW is close to the TV distance, whereas for a large value of $k$ (e.g., $k=10$), the $(2,k)$-RPW distance values are similar to $\frac{1}{k}W_2(\cdot,\cdot)$. These results are in line with our theoretical bounds in Section~\ref{sec:relation}.

\begin{figure}[t]
\centering
\includegraphics[width=.49\linewidth]{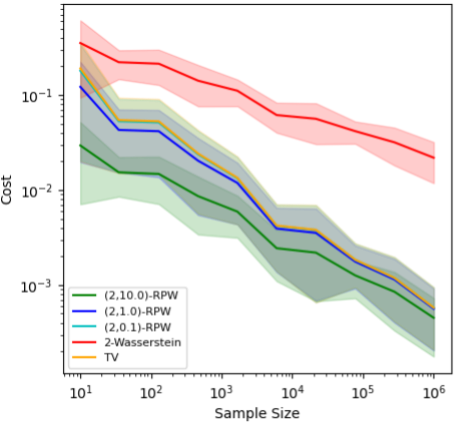}
\includegraphics[width=.49\linewidth]{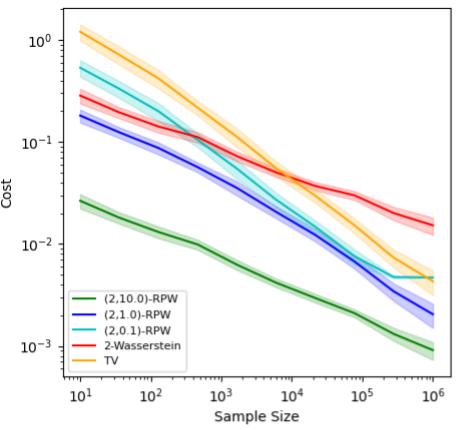}
\vspace{-1em}
\caption{The convergence rate of different metrics on (left) 2-point distribution and (right) grid distribution.}
\label{fig:converge_rate}
\end{figure}

\section{Conclusion}
In this paper, we designed a new partial $ p$-Wasserstein-based metric called the $(p,k)$-RPW that is robust to outlier noise as well as sampling discrepancies but retains the sensitivity of $p$-Wasserstein distance in capturing the minor geometric differences in distributions. We showed that our distance interpolates between the $p$-Wasserstein and TV distances and inherits robustness to both noise and shifts in distribution. We also showed that, for $p=\infty$, our metric is the same as the \levy distance.

The main contribution of this paper is to introduce a new metric and derive its useful properties. Experiments suggest that the new distance is a promising alternative to TV and $p$-Wasserstein distances. Including this distance into many potential machine learning use cases, including as a loss function in GANs or for computing barycenters of a set of distributions, remains an open question. Designing a parallel algorithm to approximate the $(p,k)$-RPW distance also remains an important open question.

\section*{Acknowledgement}
We would like to acknowledge Advanced Research Computing (ARC) at Virginia Tech, which provided us with the computational resources used to run the experiments. The research presented in this paper was funded by NSF CCF-2223871. We thank the anonymous reviewers for their useful feedback.

\section*{Impact Statement}
This paper presents work whose goal is to advance the field of 
Machine Learning. There are many potential societal consequences 
of our work, none which we feel must be specifically highlighted here.

\bibliography{main}
\bibliographystyle{icml2024}

\newpage
\appendix
\onecolumn
\section{Missing Proofs and Details}

\subsection{Missing Proofs of Section~\ref{sec:distance}.}\label{sec:appendix-2}
In this section, we provide the proofs for Theorem~\ref{lemma:metric} and Lemma~\ref{lemma:max_alpha_beta}.

\metric*
\begin{proof}
To prove this lemma, we show that $(p,k)$-RPW satisfies all four properties of the metric spaces, namely (i) identity, (ii) positivity, (iii) symmetry, and (iv) triangle inequality, and conclude that it is metric. 

For any probability distribution $\mu$ defined over $(\mcX,c)$, $W_p(\mu, \mu)=0$ and therefore, $\varepsilon=0$ satisfies the condition in Equation~\eqref{eq:ours}; hence, $\ours{p,k}{\mu, \mu}=0$ and property (i) holds. 
Furthermore, if $\nu$ is another probability distribution over $(\mcX,c)$ that is distinct from $\mu$, then  $W_p(\mu, \nu) > 0$. Hence, $\varepsilon=0$ does not satisfy the condition in Equation~\eqref{eq:ours}, and $\ours{p,k}{\mu, \nu}$, which is the smallest $\varepsilon\ge 0$ with $W_{p,1-\varepsilon}(\mu, \nu)\le k\varepsilon$, will be positive and property (ii) holds. 
Additionally, for any $\varepsilon\ge 0$, the $(1-\varepsilon)$-partial $p$-Wasserstein distance is symmetric, i.e., $W_{p, 1-\varepsilon}(\mu, \nu)=W_{p, 1-\varepsilon}(\nu, \mu)$. Therefore, from Equation~\eqref{eq:ours}, RPW is also symmetric, i.e., $\ours{p,k}{\mu, \nu} = \ours{p,k}{\nu, \mu}$ and property (iii) holds as well.

Finally, we show that the $(p,k)$-RPW satisfies the triangle inequality. For any three probability distributions $\mu$, $\nu$, and $\kappa$, suppose $\ours{p,k}{\mu, \kappa}=\varepsilon_{1}$ and $\ours{p,k}{\kappa, \nu} =\varepsilon_{2}$. In the following, we show that $\ours{p,k}{\mu, \nu}\leq \varepsilon_{1} + \varepsilon_{2}$ and conclude property (iv).

Note that if $\varepsilon_{1}+\varepsilon_{2} \ge 1$, then the triangle inequality holds trivially since $\ours{p,k}{\mu, \nu} \le 1 \le \varepsilon_{1} + \varepsilon_{2}$. Therefore, we assume that $\varepsilon_{1}+\varepsilon_{2} < 1$. Let $\gamma_{1}$ denote a $(1-{\varepsilon_1})$-partial OT plan from $\mu$ to $\kappa$ and define $\kappa_{1}:=\gamma_1\#\mu$ to be the mass of $\kappa$ that is transported from $\mu$ by $\gamma_1$; here, $\#$ denotes the push-forward operation. Similarly, let $\gamma_{2}$ denote a $(1-{\varepsilon_2})$-partial OT plan from $\kappa$ to $\nu$ and define $\kappa_{2}:=(\gamma_2)^{-1}\#\nu$ to be the mass of $\kappa$ that is transported to $\nu$ by $\gamma_2$. 
Then, both $\kappa_1$ and $\kappa_2$ are distributions over $\mcX$ that are dominated by $\kappa$ and have masses $\mass{\kappa_1}=1-\varepsilon_1$ and $\mass{\kappa_2}=1-\varepsilon_2$.

Define $\kappa_{\mathrm{c}}$ to be the distribution of mass of $\kappa$ that is common to both $\kappa_1$ and $\kappa_2$; more precisely, for each $x\in \mcX$,
\[\kappa_{\mathrm{c}}(x) = \min\{\kappa_1(x), \kappa_2(x)\}.\]
Note that the total mass of $\kappa_1$ that is not transported by $\gamma_2$ is at most $\varepsilon_2$; therefore,
\begin{equation}\label{eq:metric_0_a}
    \mass{\kappa_c}\ge \mass{\kappa_1}-\varepsilon_2= 1-\varepsilon_1-\varepsilon_2.
\end{equation}
Define $\mu_{\mathrm{c}}:=(\gamma_1)^{-1}\# \kappa_{\mathrm{c}}$ (resp. $\nu_{\mathrm{c}}:=\gamma_2\# \kappa_{\mathrm{c}}$) to be distribution dominated by $\mu$ (resp. $\nu$) whose mass is transported to (resp. from) $\kappa_c$ in $\gamma_1$ (resp. $\gamma_2$). From Equation~\eqref{eq:metric_0_a},
\begin{equation}\label{eq:metric_00_a}
    \mass{\mu_{\mathrm{c}}}=\mass{\nu_{\mathrm{c}}} = \mass{k_{\mathrm{c}}} \ge 1-\varepsilon_1-\varepsilon_2
\end{equation}
By the triangle inequality of the $p$-Wasserstein distance,
\begin{equation}\label{eq:metric_1_a}
    W_{p}(\mu_{\mathrm{c}}, \nu_{\mathrm{c}})\le W_{p}(\mu_{\mathrm{c}}, \kappa_{\mathrm{c}}) + W_{p}( \kappa_{\mathrm{c}}, \nu_{\mathrm{c}}).
\end{equation}
Furthermore, 
\begin{align}
    W_{p}(\mu_{\mathrm{c}}, \kappa_{\mathrm{c}}) \le w_{p}(\gamma_1)\le k\varepsilon_{1}.\label{eq:metric_2_a}
\end{align}
Similarly, 
\begin{align}
    W_{p}(\kappa_{\mathrm{c}}, \nu_{\mathrm{c}}) \le w_{p}(\gamma_2)\le k\varepsilon_{2}.\label{eq:metric_3_a}
\end{align}
Combining Equations~\eqref{eq:metric_00_a}, \eqref{eq:metric_1_a}, \eqref{eq:metric_2_a}, and~\eqref{eq:metric_3_a},
\begin{align}
    W_{p, 1-\varepsilon_1-\varepsilon_2}(\mu, \nu) &\le W_{p}(\mu_{\mathrm{c}}, \nu_{\mathrm{c}})\le W_{p}(\mu_{\mathrm{c}}, \kappa_{\mathrm{c}}) + W_{p}(\kappa_{\mathrm{c}}, \nu_{\mathrm{c}})\le k(\varepsilon_1 + \varepsilon_2).\label{eq:metric_4_a}
\end{align}
Since $\ours{p,k}{\mu,\nu}$ is the minimum $\varepsilon$ with $W_{p,1-\varepsilon}(\mu,\nu)\le k\varepsilon$, from Equation~\eqref{eq:metric_4_a}, $\ours{p,k}{\mu, \nu}\le   \varepsilon_{1} + \varepsilon_{2}$,
as desired.
\end{proof}

\maxalphabeta*
\begin{proof}
    Let $\delta:=\ours{p,k}{\mu, \nu}$. We prove this lemma by considering two cases:
    \begin{itemize}
        \item If $\alpha \le \beta$, then $W_{p, 1-\beta}(\mu, \nu)\le W_{p, 1-\alpha}(\mu, \nu) = k\beta$; hence, $\beta$ satisfies the condition in Equation~\eqref{eq:ours}, and $\ours{p,k}{\mu,\nu}\le \beta=\max\{\alpha, \beta\}$. Furthermore, if $k>0$, then $\delta\ge \alpha=\min\{\alpha, \beta\}$ because otherwise, if $\delta < \alpha$, then \[W_{p, 1-\delta}(\mu, \nu)\ge W_{p, 1-\alpha}(\mu, \nu)=k\beta\ge k\alpha > k\delta,\]
        which is a contradiction. 
    
        \item Otherwise, $\alpha > \beta$ and $W_{p, 1-\alpha}(\mu ,\nu) = k\beta<k\alpha$; hence, $\alpha$ satisfies the condition in Equation~\eqref{eq:ours},  and $\ours{p,k}{\mu,\nu}\le \alpha=\max\{\alpha, \beta\}.$ Additionally, if $k>0$, then $\delta\ge \beta=\min\{\alpha, \beta\}$, since otherwise, if $\delta < \beta$, then
        \begin{align*}
            W_{p,1-\delta}(\mu, \nu)&\ge W_{p,1-\beta}(\mu, \nu) \ge W_{p,1-\alpha}(\mu, \nu)=k\beta>k\delta,
        \end{align*}
        which is a contradiction.
    \end{itemize}
\end{proof}

\subsection{Missing Proofs and Details of Section~\ref{sec:robustness}}\label{sec:robustness-appendix}

\subsubsection{Robustness to Outlier Nosie}

\distortion*
\begin{proof}
    First, note that by the triangle inequality, $\ours{p,k}{\mu, \tilde\nu} + \ours{p,k}{\tilde\nu, \nu} \ge \ours{p,k}{\mu, \nu}$. Furthermore, by the definition of $\tilde\nu$, 
    $W_{p,1-\delta}(\nu, \tilde\nu) = 0$. Therefore, by Lemma~\ref{lemma:max_alpha_beta}, $\ours{p,k}{\nu, \tilde\nu}\le \max\{\delta, 0\} = \delta$ and
    \[\ours{p,k}{\mu, \tilde\nu}\ge \ours{p,k}{\mu, \nu}-\ours{p,k}{\nu, \tilde\nu}\ge \ours{p,k}{\mu, \nu} - \delta.\]
    Define $\alpha:=\ours{p,k}{\mu, \nu}$ and let $\gamma$ be a $(1-\alpha)$-partial OT plan from $\mu$ to $\nu$. Since $\tilde\nu$ is defined as $(1-\delta)\nu+\delta\nu'$, the transport plan $\gamma':=(1-\delta)\gamma$ can be seen as a $((1-\delta)(1-\alpha))$-partial transport plan from $\mu$ to $\tilde{\nu}$; therefore, \[W_{p, (1-\delta)(1-\alpha)}(\mu, \tilde{\nu})\le w_p(\gamma')= (1-\delta)^{\frac{1}{p}}w_p(\gamma)\le k(1-\delta)^{\frac{1}{p}}\alpha,\] where the last inequality holds by the definition of the $(p,k)$-RPW distance. Using Lemma~\ref{lemma:max_alpha_beta}, 
    \begin{align*}
        \ours{p,k}{\mu, \tilde{\nu}}&\le \max\left\{1-(1-\delta)(1-\alpha), (1-\delta)^{\frac{1}{p}}\alpha\right\}= (1-\delta)\alpha + \delta =  (1-\delta)\ours{p,k}{\mu,\nu} + \delta.
    \end{align*}
\end{proof}

The following lemma helps in proving Lemma~\ref{lemma:robustness}.

\begin{restatable}{lemma}{robustness}\label{lemma:robustness-app}
    For two probability distributions $\mu$ and $\tilde\mu$ defined over a metric space $(\mcX, c)$ with a unit diameter, parameters $p\ge 1, k>0$, and a constant $\alpha\in(0,1)$, let $\delta:=\|\mu- \tilde\mu\|_{TV}$. Then,
    \[\min\bigg\{\delta(1-\alpha), \frac{1}{k}W_{p,1-\delta(1-\alpha)}(\mu, \tilde{\mu})\bigg\}\le \ours{p,k}{\mu, \tilde{\mu}}\le \min\left\{\delta, \frac{1}{k}W_p(\mu, \tilde\mu)\right\}.\]
\end{restatable}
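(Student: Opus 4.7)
The plan is to prove the upper and lower bounds separately, both by straightforward applications of Lemma~\ref{lemma:max_alpha_beta} together with two elementary facts: (i) if $\|\mu-\tilde\mu\|_{\mathrm{TV}} = \delta$, then $W_{p,1-\delta}(\mu,\tilde\mu)=0$; and (ii) the partial Wasserstein distance $s \mapsto W_{p,s}(\mu,\tilde\mu)$ is non-decreasing in the amount of transported mass $s$. No serious obstacle is expected; the whole argument is essentially a case analysis on how $\Pi_{p,k}(\mu,\tilde\mu)$ sits relative to $\delta(1-\alpha)$.

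For the upper bound, I would prove the two inequalities $\Pi_{p,k}(\mu,\tilde\mu)\le \delta$ and $\Pi_{p,k}(\mu,\tilde\mu)\le \tfrac{1}{k}W_p(\mu,\tilde\mu)$ separately and then take the minimum. For the first, I invoke the standard TV overlap decomposition $\mu=(1-\delta)\rho+\delta\mu'$ and $\tilde\mu=(1-\delta)\rho+\delta\tilde\mu'$ for a common sub-distribution $\rho$ of total mass $1-\delta$. The identity plan on $\rho$ is a $(1-\delta)$-partial transport of cost $0$, so $W_{p,1-\delta}(\mu,\tilde\mu)=0=k\cdot 0$, and Lemma~\ref{lemma:max_alpha_beta} (with parameters $\delta$ and $0$) yields $\Pi_{p,k}(\mu,\tilde\mu)\le \delta$. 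For the second, observe that $W_{p,1}(\mu,\tilde\mu)=W_p(\mu,\tilde\mu)=k\cdot \tfrac{W_p(\mu,\tilde\mu)}{k}$, so Lemma~\ref{lemma:max_alpha_beta} (with parameters $0$ and $\tfrac{W_p(\mu,\tilde\mu)}{k}$) yields $\Pi_{p,k}(\mu,\tilde\mu)\le \tfrac{1}{k}W_p(\mu,\tilde\mu)$.

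For the lower bound, set $\varepsilon^\star:=\Pi_{p,k}(\mu,\tilde\mu)$, so by the definition of the RPW metric $W_{p,1-\varepsilon^\star}(\mu,\tilde\mu)\le k\varepsilon^\star$. Split into two cases. If $\varepsilon^\star\ge \delta(1-\alpha)$, then trivially $\varepsilon^\star\ge \min\bigl\{\delta(1-\alpha),\tfrac{1}{k}W_{p,1-\delta(1-\alpha)}(\mu,\tilde\mu)\bigr\}$. Otherwise $\varepsilon^\star<\delta(1-\alpha)$, so $1-\varepsilon^\star>1-\delta(1-\alpha)$, and by monotonicity of $s\mapsto W_{p,s}$ we get $W_{p,1-\delta(1-\alpha)}(\mu,\tilde\mu)\le W_{p,1-\varepsilon^\star}(\mu,\tilde\mu)\le k\varepsilon^\star$, which rearranges to $\varepsilon^\star\ge \tfrac{1}{k}W_{p,1-\delta(1-\alpha)}(\mu,\tilde\mu)$, again dominating the stated minimum. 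Combining the two cases concludes the proof.

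The only subtle point worth spelling out is that the monotonicity of $W_{p,s}$ in $s$ (used in the lower bound) and the TV-overlap fact $W_{p,1-\delta}(\mu,\tilde\mu)=0$ (used in the upper bound) are both quick consequences of the definition of partial optimal transport; neither genuinely requires metric structure beyond what is already assumed. Thus the proof is expected to be short, with Lemma~\ref{lemma:max_alpha_beta} doing the real work of converting the two one-sided bounds $W_{p,1-\delta}=0$ and $W_{p,1}=W_p$ into bounds on $\Pi_{p,k}$.
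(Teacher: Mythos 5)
Your proposal is correct and follows essentially the same route as the paper: the upper bound is obtained by applying Lemma~\ref{lemma:max_alpha_beta} to the two facts $W_{p,1-\delta}(\mu,\tilde\mu)=0$ (from the TV overlap) and $W_{p,1}(\mu,\tilde\mu)=W_p(\mu,\tilde\mu)$, and the lower bound is exactly the $\ge\min\{\alpha,\beta\}$ conclusion of Lemma~\ref{lemma:max_alpha_beta}, which you simply re-derive inline via the monotonicity case split rather than citing the lemma. (Incidentally, the body of the paper states Lemma~\ref{lemma:max_alpha_beta}'s second conclusion with a ``$\le$'' that should read ``$\ge$'', as its proof shows; your direct derivation conveniently sidesteps that typo.)
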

\begin{proof}
    Using Lemma~\ref{lemma:max_alpha_beta} on distributions $\mu$ and $\tilde{\mu}$,
    \begin{equation*}
        \ours{p,k}{\mu, \tilde{\mu}} \ge \min\bigg\{\delta(1-\alpha), \frac{1}{k}W_{p,1-\delta(1-\alpha)}(\mu, \tilde{\mu})\bigg\}.
    \end{equation*}
    Next, since $\delta=\|\mu- \tilde\mu\|_{TV}$, we get $W_{p,1-\delta}(\mu, \tilde\mu)=0$. Plugging into Lemma~\ref{lemma:max_alpha_beta},
    \begin{equation}\label{eq:sensitivity-1}
        \ours{p,k}{\mu, \tilde{\mu}} \le \max\{\delta, 0\}= \delta.
    \end{equation}
    Furthermore, since $W_{p,1-0}(\mu,\tilde\mu)=W_p(\mu,\tilde\mu)$, by Lemma~\ref{lemma:max_alpha_beta},
    \begin{equation}\label{eq:sensitivity-2}
        \ours{p,k}{\mu, \tilde{\mu}} \le \max\{0,  \frac{1}{k}W_p(\mu, \tilde\mu)\} = \frac{1}{k}W_p(\mu, \tilde\mu).
    \end{equation}
    Combining Equations~\eqref{eq:sensitivity-1} and~\eqref{eq:sensitivity-2},
    \begin{equation*}
        \ours{p,k}{\mu, \tilde{\mu}} \le \min\left\{\delta, \frac{1}{k}W_p(\mu, \tilde\mu)\right\},
    \end{equation*}
    as claimed.
\end{proof}

Assuming that the assumption (A1) holds for the distributions $\mu$ and $\tilde\mu$, by plugging $\alpha=0.9$ in Lemma~\ref{lemma:robustness-app}, we can derive the following lemma.

\robustnesss*

\subsubsection{Robustness to Sampling Discrepancies}\label{sec:convergence_appendix}

\begin{restatable}{lemma}{convergenceexcessd}\label{lemma:convergence_excess_d}
    For any distribution $\mu$ inside the unit $d$-dimensional hypercube, an empirical distribution $\mu_n$ of $\mu$, and a grid $\mcG$ with cell side length $n^{-\alpha}$, $\excess_\mu(\mcG)= \tilde{O}(n^{\frac{d\alpha}{2}-\frac{1}{2}})$ with a high probability. 
\end{restatable}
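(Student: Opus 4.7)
The plan is to control $\excess_\mu(\mcG)$ in two stages: first bound its expectation by a per-cell second-moment argument combined with Cauchy--Schwarz, and then upgrade to a high-probability statement via the bounded-differences (McDiarmid) inequality. Partitioning the unit $d$-dimensional hypercube into cells of side $n^{-\alpha}$ yields $N := n^{d\alpha}$ cells, and for each cell $\cell$ the count $n\mu_n(\cell)$ is distributed as $\mathrm{Binomial}(n,\mu(\cell))$.

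For the expectation, I would use $\excess_\mu(\cell) \le |\mu(\cell)-\mu_n(\cell)|$ together with Jensen's inequality to obtain
\begin{equation*}
\mbE[\excess_\mu(\cell)] \;\le\; \sqrt{\Var(\mu_n(\cell))} \;=\; \sqrt{\mu(\cell)(1-\mu(\cell))/n} \;\le\; \sqrt{\mu(\cell)/n}.
\end{equation*}
Summing over the $N$ cells and applying Cauchy--Schwarz against the all-ones vector, using $\sum_\cell \mu(\cell)\le 1$, gives
\begin{equation*}
\mbE[\excess_\mu(\mcG)] \;\le\; \frac{1}{\sqrt n}\sum_{\cell\in\mcG}\sqrt{\mu(\cell)} \;\le\; \frac{1}{\sqrt n}\sqrt{N\cdot\textstyle\sum_\cell \mu(\cell)} \;\le\; \sqrt{N/n} \;=\; n^{\frac{d\alpha}{2}-\frac{1}{2}}.
\end{equation*}

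For the high-probability bound, observe that $\excess_\mu(\mcG)$ is a function of the $n$ i.i.d.\ samples, and moving one sample changes $\mu_n$ at most at two cells, each by $1/n$; hence the total excess can change by at most $2/n$. McDiarmid's inequality then yields
\begin{equation*}
\Pr\!\bigl[\excess_\mu(\mcG) - \mbE[\excess_\mu(\mcG)] > t\bigr] \;\le\; \exp\!\bigl(-t^2 n/2\bigr),
\end{equation*}
so taking $t = C\sqrt{\log n / n}$ for a sufficiently large constant $C$ gives $\excess_\mu(\mcG) \le \mbE[\excess_\mu(\mcG)] + \tilde O(n^{-1/2})$ with high probability. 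Since $d\alpha\ge 0$, the expectation term $n^{d\alpha/2-1/2}$ dominates the $n^{-1/2}$ deviation, producing the claimed bound $\excess_\mu(\mcG) = \tilde O(n^{\frac{d\alpha}{2}-\frac{1}{2}})$.

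The argument is fairly mechanical; the only delicate point is the per-cell second-moment bound, where Cauchy--Schwarz on $\sum_\cell\sqrt{\mu(\cell)}$ is exactly what produces the extra $\sqrt N$ factor and hence the $n^{d\alpha/2}$ term. Cells with $\mu(\cell)\ll 1/n$ might seem to break the analysis because the variance overestimates the actual excess; however, the mass constraint $\sum_\cell\mu(\cell)\le 1$ keeps the Cauchy--Schwarz bound tight up to constants, so no sharper per-cell analysis is required. An alternative route would be to express the excess as a sum of independent bounded contributions indexed by sample points and use Bernstein, but McDiarmid gives the cleanest and tightest concentration for this setup.
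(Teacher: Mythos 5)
Your proof is correct and proceeds by a genuinely different route than the paper. The paper never passes through expectation plus a global concentration argument; instead it classifies cells as sparse ($p_\cell \le 9\log n/n$) or dense, bounds sparse cells deterministically by their mass, applies a per-cell Chernoff bound plus a union bound to the dense cells, and only then sums with Cauchy--Schwarz. Your two-step scheme---first $\mbE[\excess_\mu(\cell)] \le \sqrt{\mu(\cell)/n}$ via the variance of the binomial count and Jensen, then Cauchy--Schwarz to get $\mbE[\excess_\mu(\mcG)] \le \sqrt{N/n}$, then McDiarmid with bounded difference $2/n$ per sample---is cleaner and more modular, and notably produces no logarithmic factor in the main $n^{d\alpha/2-1/2}$ term (the polylog sits only in the $\tilde O(n^{-1/2})$ deviation, which is dominated). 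The paper's approach, in exchange, yields a per-cell tail bound, which lets them handle different grids by the same mechanism, and it avoids appealing to any function-of-independent-variables concentration inequality; but both approaches land on the same $\tilde O(n^{d\alpha/2-1/2})$ bound. One small point your write-up leaves implicit: when $\alpha \ge 1/d$ the bound $\sqrt{N/n}\ge 1$ is vacuous since the excess is trivially at most the total mass, and the paper disposes of that case at the outset; your argument remains formally correct there because it only claims an upper bound, but noting the trivial case would tighten the exposition.
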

\begin{proof}
    First, note that if $\alpha\ge\frac{1}{d}$, then $n^{\frac{d\alpha}{2}-\frac{1}{2}} \ge 1$, and the lemma statement holds trivially. Therefore, from now on, we assume $\alpha$ to be less than $\frac{1}{d}$.
    For any cell $\cell$ of the grid $\mcG$, define $p_\cell:=\mu(\cell)$ to be the total probability mass of $\mu$ inside $\cell$, i.e., the probability that a point drawn from $\mu$ lies inside $\cell$. Any cell $\cell\in\mcG$ is considered a sparse cell if $p_\cell\le \frac{9\log n}{n}$, and a dense cell otherwise. Let $\mcG_\mcS$ (resp. $\mcG_\mcD$) denote the subset of sparse (resp. dense) cells of $\mcG$. For each sparse cell $\cell$,  $\excess_\mu(\cell)\le p_\cell\le \frac{9\log n}{n}$; therefore, using $\alpha<\frac{1}{d}$, the total contribution of sparse cells to the excess of $\mcG$ is at most \[O(|\mcG_\mcS|\times \frac{\log n}{n}) = O(n^{d\alpha}\times \frac{\log n}{n}) = \tilde{O}(n^{d\alpha-1})=\tilde{O}(n^{\frac{d\alpha}{2}-\frac{1}{2}}).\] 
    
    Next, we analyze the excess of the dense cells. Let $X=(x_1, \ldots, x_n)$ denote the set of $n$ samples drawn from $\mu$ that were used to construct the empirical distribution $\mu_n$. For each dense cell $\cell$, let $Y_\cell$ be a random variable denoting the number of samples in $X$ that lie inside $\cell$. 
    Using the Chernoff bound,
    \[Pr\big[Y_\cell\le np_\cell -3\sqrt{np_\cell\log n}\big]\le n^{-\frac{9}{2}}.\]
    In other words, for each $\cell\in\mcG_\mcD$, $\excess_\mu(\cell)=O(\frac{1}{n}\sqrt{np_\cell\log n})$ with probability at least $1-n^{-\frac{9}{2}}$.
    Therefore, with probability at least $(1-n^{-9/2})^{|\mcG_\mcD|}\ge (1-n^{-9/2})^{n}\ge 1-n^{-\frac{7}{2}}$, the total excess of the dense cells would be
    \begin{align*}
        \sum_{\cell\in\mcG_\mcD} \excess_\mu(\cell) &=O\left(\sum_{\cell\in\mcG_\mcD} \sqrt{\frac{p_\cell\log n}{n}}\right) = O\left(\sqrt{\frac{\log n}{n}}\sum_{\cell\in\mcG_\mcD} \sqrt{p_\cell}\right) \\ &= O\left(\sqrt{\frac{\log n}{n}}\times \sqrt{|\mcG_\mcD|}\right) = \tilde{O}(n^{\frac{d\alpha}{2}-\frac{1}{2}}),
    \end{align*}
    where the third equality holds since $\frac{\sum_{\cell\in\mcG}\sqrt{p_\cell}}{|\mcG_\mcD|}\le \sqrt{\frac{\sum_{\cell\in\mcG_\mcD}p_\cell}{|\mcG_\mcD|}}$ \cite{Sedrakyan2018} and $\sum_{\cell\in\mcG_\mcD}p_\cell\le 1$, and the last equality holds since $|\mcG_\mcD|\le n^{d\alpha}$.
\end{proof}

We obtain the following lemma by simply plugging $d=2$ in Lemma~\ref{lemma:convergence_excess_d}.

\convergenceexcess*

{\bf Improved proof of Lemma~\ref{lemma:convergence_2}.} We improve our bounds for the convergence rate of the empirical $(2,1)$-RPW by extending our approach and considering $O(\log\log n)$ grids instead of two grids. In the following, we construct a transport plan $\gamma$ that transports all except $\tilde{O}(n^{-\frac{1}{3}})$ mass with a cost of $\tilde{O}(n^{-\frac{1}{3}})$. We then conclude that $\ours{p,k}{\mu, \nu}=\tilde{O}(n^{-\frac{1}{3}})$. 

Without loss of generality, assume $n=2^{2^h}$ for some integer $h>0$. Let $\beta:=\frac{\log n}{3\log n - 2}$. Define a set of $h$ grids $\langle\mcG_1, \ldots, \mcG_h\rangle$, where each grid $\mcG_i$ has a side length $O(n^{-\alpha_i})$ for
$\alpha_i:= \frac{1}{2} - \beta\left(1-\frac{1}{2^i}\right)$.
We construct the grids in a way that their boundaries are aligned with each other. 

Let $\mu^0:=\mu$ and $\mu_n^0:=\mu_n$. Starting from $i=1$, we compute a partial transport plan $\gamma_i$ from $\mu_n^{i-1}$ to $\mu^{i-1}$ that transports as much mass as possible inside each cell of $\mcG_i$. We then define $\mu^i$ (resp. $\mu_n^i$) as the distribution of the mass of $\mu^{i-1}$ (resp. $\mu_n^{i-1}$) that is not transported by $\gamma_i$, set $i\leftarrow i+1$, and continue the same process until we process the last grid $\mcG_h$. Define $\gamma:=\sum_{i=1}^h \gamma_i$.
By our construction, the transport plan $\gamma$ transports as much mass as possible inside each cell of $\mcG_h$. Therefore, the total mass that is not transported by $\gamma$ is equal to the excess $\excess(\mcG_h)$, which by Lemma~\ref{lemma:convergence_excess}, with a high probability, is 
\begin{align}
    \tilde{O}(n^{\alpha_h-\frac{1}{2}}) &= \tilde{O}(n^{-\beta(1-\frac{1}{2^h})})=\tilde{O}(n^{\frac{-\log n+1}{3\log n - 2}})=\tilde{O}(n^{-\frac{1}{3} + \frac{1}{3(3\log n-2)}}) = \tilde{O}(n^{-\frac{1}{3}}).\label{eq:convergence_proof_2}
\end{align}
Next, we analyze the cost of $\gamma$ by computing the cost of each transport plan $\gamma_i$ separately. For $\gamma_1$, since each mass transportation is between points inside the same cell of $\mcG_1$ and has a squared cost of $O((n^{-\alpha_1})^2)$,
\begin{align*}
    w_2^2(\gamma_1) &= O(n^{-2\alpha_1}) = O(n^{-1 + \beta}) = O(n^{-\frac{2\log n - 2}{3\log n - 2}}) = O(n^{-\frac{2}{3} + \frac{2}{3(3\log n-2)}})=O(n^{-\frac{2}{3}}). 
\end{align*}
Furthermore, for each $i>1$, the transport plan $\gamma_i$ transports a total mass of at most $\mass{\mu^{i-1}}$, which is equal to the excess of the grid $\mcG_{i-1}$, and by Lemma~\ref{lemma:convergence_excess} is $\tilde{O}(n^{\alpha_{i-1}-\frac{1}{2}})$. Since $\gamma_i$ transports mass between points inside the same cell of $\mcG_i$, each mass transportation in $\gamma_i$ has a squared cost of $O(n^{-2\alpha_i})$, and therefore,
\begin{align*}
    w_2^2(\gamma_i) &= \tilde{O}(n^{\alpha_{i-1}-\frac{1}{2}-2\alpha_i}) = \tilde{O}(n^{-1+ \beta})=\tilde{O}(n^{-\frac{2}{3}}).
\end{align*}
Therefore, 
\begin{equation}
    w_2(\gamma) = \sqrt{\sum_{i=1}^h w_2^2(\gamma_i)} = \tilde{O}(\sqrt{hn^{-\frac{2}{3}}})= \tilde{O}(n^{-\frac{1}{3}}).\label{eq:convergence_proof_3}
\end{equation}
By Equations~\eqref{eq:convergence_proof_2} and~\eqref{eq:convergence_proof_3}, we have computed a transport plan $\gamma$ from $\mu_n$ to $\mu$ that, with a high probability, transports all except $\tilde{O}(n^{-\frac{1}{3}})$ mass with a cost $\tilde{O}(n^{-\frac{1}{3}})$. Therefore, $\ours{2}{\mu, \mu_n}=\tilde{O}(n^{-\frac{1}{3}})$ with a high probability.

We extend the same approach to any dimension $d\ge 2$ and any $p\ge 1$ in Lemma~\ref{lemma:convergence}.

\convergenceGeneral*
\begin{proof}
In Lemma~\ref{lemma:relationWasserstein}, we show that for any $p\ge 1$ and $k>0$, $\ours{p,k}{\mu, \nu}\le \frac{1}{k}W_p(\mu, \nu)$. Therefore, for any constant $k>0$, the convergence rate of the empirical $(p,k)$-RPW is upper-bounded by the convergence rate of the empirical $p$-Wasserstein distance. \citet{fournier2015rate} showed that when $p\le \frac{d}{2}$, the $p$-Wasserstein distance achieves a convergence rate of $\tilde{O}(n^{-\frac{1}{d}})$. Therefore, our metric also achieves a convergence rate of $\tilde{O}(n^{-\frac{1}{d}})$ in this case, proving the bound claimed in the lemma statement for $p\le \frac{d}{2}$. In the remaining of this proof, we prove our bounds for $p>\frac{d}{2}$.

Let $h=\log_{\frac{2p}{d}} \log_2 n$. We assume that $h$ is an integer. Let $\beta:=\frac{\log n}{(p+\frac{2p}{d}-1)\log n - p}$. Define a set of $h$ grids $\langle \mcG_1, \ldots, \mcG_h\rangle$, where each grid $\mcG_i$ has a cell side length of $n^{-\alpha_i}$ for \[\alpha_i:=\frac{1}{d}-\frac{2p}{d^2}\beta\left(1-(\frac{d}{2p})^i\right).\] 
Following the approach discussed in Section~\ref{sec:convergence}, we construct $h$ transport plans $\gamma_1,\ldots, \gamma_h$ and define a transport plan $\gamma:=\sum_{i=1}^h\gamma_i$ to be a transport plan that transports total mass of $\min\{\mu(\cell), \mu_n(\cell)\}$ inside each cell $\cell\in\mcG_h$. 
By Lemma~\ref{lemma:convergence_excess_d}, the total free mass with respect to $\gamma$ would be
\begin{align}
    1-\mcM(\gamma)=\excess(\mcG_h) &= \tilde{O}\left(n^{\frac{d\alpha_h}{2}-\frac{1}{2}}\right) = \tilde{O}\left(n^{\frac{1}{2}-\frac{p}{d}\beta + \frac{p}{d}\beta\cdot (\frac{d}{2p})^h - \frac{1}{2}}\right) = \tilde{O}\left(n^{-\frac{\log_2 n -1}{(d+2-\frac{d}{p})\log n - d}}\right)\nonumber\\ &= \tilde{O}\left(n^{-\frac{1}{d+2-\frac{d}{p}} +\frac{1-\frac{d}{d+2-\frac{d}{p}}}{(d+2-\frac{d}{p})\log n - d}}\right)= \tilde{O}\left(n^{-\frac{p}{(d+2)p-d}}\right).\label{eq:convergence-equal-1}
\end{align}
Next, we bound the cost of $\gamma$ by analyzing the cost of each transport plan $\gamma_i, i\in[1,h]$ separately.
For $\gamma_1$, each mass transportation is inside a cell of $\mcG_1$ and has a $p$th power cost of $\tilde{O}(n^{-p\alpha_1})$; hence,
\begin{equation}
    w_p^p(\gamma_1)=\tilde{O}(n^{-p\alpha_1})=\tilde{O}\left(n^{-\frac{p}{d}+\frac{2p^2}{d^2}\beta\left(1-\frac{d}{2p}\right)}\right)=\tilde{O}\left(n^{-\frac{p^2}{(d+2)p-d}}\right).
\end{equation}
Finally, for each $i>1$, the total mass transported by $\gamma_i$ is equal to the excess of $\mcG_{i-1}$, which by Lemma~\ref{lemma:convergence_excess} is $\tilde{O}(n^{\frac{d\alpha_{i-1}}{2}-\frac{1}{2}})$. Each mass transportation in $\gamma_i$ is between points within the same cell of $\mcG_{i}$ and thus has a $p$th power cost of $\tilde{O}(n^{-p\alpha_i})$. Therefore,
\begin{align}
    w_p^p(\gamma_i)&=\tilde{O}(n^{\frac{d\alpha_{i-1}}{2}-\frac{1}{2}-p\alpha_i})=\tilde{O}\left(n^{-\frac{p^2}{(d+2)p-d}}\right).
\end{align}
Adding the cost of all transport plans,
\begin{align}
    w_p(\gamma)&=\bigg(\sum_{i=1}^h w_p^p(\gamma_i)\bigg)^{1/p} = \tilde{O}\big((n^{-\frac{p^2}{(d+2)p-d}}\log\log n)^{1/p}\big) = \tilde{O}(n^{-\frac{p}{(d+2)p-d}}).\label{eq:convergence-equal-2}
\end{align}
Combining Equations~\eqref{eq:convergence-equal-1} and~\eqref{eq:convergence-equal-2}, $\ours{p}{\mu, \mu_n} = \tilde{O}(n^{-\frac{p}{(d+2)p-d}})$.
\end{proof}

\subsection{Missing Proofs of Section~\ref{sec:relation}.}\label{sec:relation-appendix}

To prove Lemma~\ref{lemma:relationLevy}, we begin by showing in Lemma~\ref{lemma:relationLevy-a} that when $\mu$ and $\nu$ are discrete distributions, $\ours{\infty,1}{\mu,\nu}=\lp{\mu,\nu}$. We then use Lemma~\ref{lemma:relationLevy-a} to show that the same also holds for continuous distributions.

\begin{restatable}{lemma}{relationLevy}\label{lemma:relationLevy-a}
    For any pair of discrete probability distributions $\mu$ and $\nu$ in a metric space $(\mcX, c)$ with a unit diameter, $\ours{\infty,1}{\mu, \nu}= \lp{\mu, \nu}$.
\end{restatable}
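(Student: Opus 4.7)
My plan is to identify both $\ours{\infty,1}{\mu,\nu}$ and $\lp{\mu,\nu}$ with the same coupling-based quantity coming from Strassen's theorem, so that the equality reduces to a bookkeeping translation between $(1-\varepsilon)$-partial transport plans and full couplings that misplace at most $\varepsilon$ mass. Recall that Strassen's theorem characterizes the Lévy-Prokhorov distance for discrete $\mu, \nu$ on $(\mcX, c)$ as
\begin{equation*}
    \lp{\mu, \nu} = \inf\bigl\{\varepsilon \ge 0 : \text{there is a coupling } \gamma \text{ of } \mu,\nu \text{ with } \gamma(\{(x,y) : c(x,y) > \varepsilon\}) \le \varepsilon\bigr\}.
\end{equation*}
Meanwhile, by definition, $\ours{\infty,1}{\mu,\nu}$ is the smallest $\varepsilon \ge 0$ admitting a $(1-\varepsilon)$-partial transport plan of $w_\infty$-cost at most $\varepsilon$. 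I will prove the equality by producing, from a witness for one definition, a witness for the other.

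For the direction $\ours{\infty,1}{\mu,\nu} \le \lp{\mu,\nu}$: set $\varepsilon = \lp{\mu,\nu}$ and let $\gamma$ be a coupling supplied by Strassen's characterization (the infimum is attained for discrete distributions because the feasible set of $\varepsilon$ is closed and only finitely many thresholds are relevant). Restricting $\gamma$ to the sublevel set $\{(x,y) : c(x,y) \le \varepsilon\}$ gives a plan $\gamma'$ with $\mass{\gamma'} \ge 1 - \varepsilon$, whose marginals are dominated by $\mu$ and $\nu$, and with $w_\infty(\gamma') \le \varepsilon$ by construction. Hence $\gamma'$ is a valid $(1-\varepsilon)$-partial transport plan, which yields $W_{\infty, 1-\varepsilon}(\mu,\nu) \le \varepsilon$ and therefore $\ours{\infty,1}{\mu,\nu} \le \varepsilon$.

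For the reverse direction $\lp{\mu,\nu} \le \ours{\infty,1}{\mu,\nu}$: set $\varepsilon = \ours{\infty,1}{\mu,\nu}$ and take a $(1-\varepsilon)$-partial OT plan $\gamma_0$ with $w_\infty(\gamma_0) \le \varepsilon$. Extend $\gamma_0$ to a full coupling $\gamma$ of $\mu$ and $\nu$ by joining the two residual marginals (each of mass exactly $\varepsilon$) via any coupling, e.g.\ the product of the normalized residuals rescaled by $\varepsilon$. Since $\gamma_0$ already places all of its mass in $\{c \le \varepsilon\}$, at most the $\varepsilon$-mass completion can lie in $\{c > \varepsilon\}$, so $\gamma(\{(x,y) : c(x,y) > \varepsilon\}) \le \varepsilon$; Strassen's theorem then gives $\lp{\mu, \nu} \le \varepsilon$.

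The main point requiring care is the strict-versus-weak inequality convention in Strassen's theorem (whether the bad set is $\{c > \varepsilon\}$ or $\{c \ge \varepsilon\}$), which depends on whether $\varepsilon$-neighborhoods in the Lévy-Prokhorov definition are taken open or closed. For discrete distributions this is benign: the set of admissible $\varepsilon$ is a finite union of closed intervals on each side, both infima are attained, and any boundary technicality can be handled by a short limit argument $\varepsilon' \downarrow \varepsilon$. Once this is addressed, the rest of the proof is just the partial-plan-to-coupling correspondence described above.
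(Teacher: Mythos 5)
Your proof is correct in substance, but it takes a genuinely different route from the paper's. You invoke Strassen's theorem for the L\'evy--Prokhorov metric (the coupling characterization $\lp{\mu,\nu}\le\varepsilon \Leftrightarrow \exists\gamma$ with $\gamma(\{c>\varepsilon\})\le\varepsilon$) as a black box, and then the equality with $\ours{\infty,1}{\mu,\nu}$ reduces to the clean dictionary between $(1-\varepsilon)$-partial plans of $w_\infty$-cost $\le\varepsilon$ and full couplings that misplace $\le\varepsilon$ mass: restrict a coupling to $\{c\le\varepsilon\}$ in one direction, pad a partial plan arbitrarily to a full coupling in the other. The paper instead reproves the relevant discrete case of that characterization from scratch: it builds the $\delta$-disc bipartite graph on the supports, augments it with a fake vertex of mass $\delta$, and applies a mass-transport extension of Hall's marriage theorem to extract the desired partial plan; the reverse direction is verified against the neighborhood-expansion definition of $\lp{\cdot,\cdot}$ rather than the coupling form. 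The upshot is that your argument is shorter and more modular, but leans on Strassen's theorem (whose finite-support proof is itself a Hall/max-flow argument), whereas the paper's is self-contained and only needs Hall's theorem. Both approaches share the same implicit attainment assumptions --- that the infima defining $\ours{\infty,1}{\mu,\nu}$ and $\lp{\mu,\nu}$ are achieved for discrete distributions --- which you flag explicitly and the paper uses silently; your care about the open/closed-threshold convention in Strassen's statement is a reasonable extra precaution that the paper sidesteps by working with the disc graph directly.
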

\begin{proof}
    To prove this lemma, we first show that $\ours{\infty, 1}{\mu, \nu}\le \lp{\mu, \nu}$. We then show that $\lp{\mu, \nu}\le \ours{\infty, 1}{\mu, \nu}$ and conclude the lemma statement. 
    
    Let $\delta:=\lp{\mu, \nu}$ and suppose $A$ and $B$ denote the support of $\mu'$ and $\nu'$, respectively. Define the $\delta$-disc graph $G_\delta$ between points in $A$ and $B$ to be a bipartite graph where for each pair $(a,b)\in A\times B$ with $c(a,b)\le \delta$, there is an edge between $a$ and $b$ in $G_\delta$. For any set $S$ of vertices of $G_\delta$, let $\mcN(S)$ denote the set of neighbors of $S$ in $G_\delta$. By the definition of the \levy distance, for any set of points $S\subseteq A$, $\mu(S)\le \nu(\mcN(S)) + \delta$. Similarly, for any subset $T\subseteq B$, $\nu(T)\le \mu(\mcN(T)) + \delta$. 

    We prove that $\ours{\infty, 1}{\mu, \nu}\le \lp{\mu, \nu}$ by showing that the maximum transport plan $\gamma$ on the $\delta$-disc graph $G_\delta$ transports a total mass of at least $1-\delta$. In this case, since all edges of $\gamma$ has a cost of at most $\delta$, $w_\infty(\gamma)\le \delta$. Hence, the $(1-\delta)$-partial $\infty$-Wasserstein distance from $\mu$ to $\nu$ would be at most $\delta$ and $\ours{\infty,1}{\mu, \nu}\le \delta$, as claimed.

    Consider a bipartite graph $G'_\delta$ obtained from $G_\delta$ by adding a fake vertex $b'$, where $b'$ has a mass of $\delta$ and is connected to all points of $A$ with a cost $\delta$. For any subset $S\subseteq A$ (resp. $T\subseteq B$), let $\mcN'(S)$ (resp. $\mcN'(T)$) denote the set of neighbors of $S$ (resp. $T$) in $G'_\delta$ and suppose $\mu'(S)$ (resp. $\nu'(T)$) denotes the total mass of points in $S$ (resp. $T$) for any subset $S\subseteq A$ (resp. $T\subseteq B\cup \{b'\}$). By construction, for any subset $S\subseteq A$, $\mu'(S)\le \nu'(\mcN'(S))$ and similarly, for any subset $T\subseteq B$, $\nu'(T)\le \mu'(\mcN'(T))$; hence, by the extension of Hall's marriage theorem \cite{bansil2021w}, there exists a transport plan $\gamma'$ on the graph $G'_\delta$ that transports all mass of the points in $A$ to the points in $B\cup\{b'\}$. Let $\gamma$ denote the transport plan obtained from $\gamma'$ after removing the fake vertex $b'$ and all mass transportation to $b'$. The transport plan $\gamma$ transports at least $1-\delta$ mass and has a cost $w_\infty(\gamma')\le \delta$.
    Therefore, if $\gamma$ transports a total mass of $1-\delta'$ for some $\delta'\le \delta$,
    \[W_{\infty, 1-\delta}(\mu, \nu)\le W_{\infty, 1-\delta'}(\mu, \nu) \le w_\infty(\gamma)\le \delta, \]
    which means that $\ours{\infty,1}{\mu, \nu}\le \delta=\lp{\mu, \nu}$.
    We next show that $\ours{\infty,1}{\mu, \nu}\ge \lp{\mu, \nu}$ in a similar way and conclude that $\ours{\infty,1}{\mu, \nu}= \lp{\mu, \nu}$.

    Let $\delta:=\ours{\infty,1}{\mu, \nu}$. Let $\gamma$ be a $(1-\delta)$-partial OT plan from $\mu$ to $\nu$ and let $G_\delta$ be a $\delta$-disk graph on $A\cup B$. Since $w_\infty(\gamma)\le \delta$, all mass transportation by $\gamma$ has a cost at most $\delta$, i.e., all edges carrying a positive mass in $\gamma$ are present in $G_\delta$. For any subset $S\subseteq A$, let $\mu_S$ be the distribution of the mass of $\mu$ on the points in $S$. Define $\nu_S := \gamma\#\mu_S$ to be the subset of mass of $\nu$ that is transported from $\mu_S$ according to $\gamma$, and let $T_S\subseteq B$ be the support of $\nu_S$. Recall that all edges carrying a positive mass in $\gamma$ are present in $G_\delta$; therefore, all points in $T_S$ are neighbors of $S$ in $G_\delta$, i.e., $T\subseteq \mcN(S)$ and $\nu(T_S)\le \nu(\mcN(S))$. Furthermore, since $\gamma$ is a $(1-\delta)$-partial OT plan, the total mass of $\mu_S$ that is not transported by $\gamma$ is at most $\delta$, and hence, 
    \[\mu(S)\le \nu(T_S)+\delta\le \nu(\mcN(S)) + \delta.\]
    One can also show that for each subset $T\subseteq B$, $\nu(T)\le \mu(\mcN(T))+\delta$ using an identical argument. Therefore, by the definition of the \levy distance, $\lp{\mu, \nu}\le\delta=\ours{\infty,1}{\mu,\nu}$.
\end{proof}

In the following, we show that for any pair of (continuous) probability distributions $\mu$ and $\nu$ and any $\varepsilon>0$,
\begin{equation}\label{eq:ours-lp-relation0}
    |\ours{\infty,1}{\mu, \nu}-\lp{\mu, \nu}|\le \varepsilon
\end{equation}
and conclude that $\ours{\infty,1}{\mu, \nu}=\lp{\mu, \nu}$ for any pair of probability distributions (discrete or continuous).

Define $\cell$ to be a unit $d$-dimensional hypercube containing the set $\mcX$. Let $\mcG$ be a grid of cell side length $\frac{\varepsilon}{4\sqrt{d}}$ that partitions $\cell$ into smaller cells. Using the grid $\mcG$, we construct two discrete distributions $\mu^\varepsilon$ and $\nu^\varepsilon$ as follows. Let $\mcG_\mcX$ denote the subset of cells of $\mcG$ that intersects the set $\mcX$. For each cell $\xi\in\mcG_\mcX$, we pick an arbitrary point $r_\xi$ inside $\xi\cap\mcX$ as the representative point of $\xi$. Let $\mcR:=\bigcup_{\xi\in\mcG}\{r_\xi\}$. Define $\mu^\varepsilon$ (resp. $\nu^\varepsilon$) as a discrete distribution over $\mcR$ that assigns, for each $\xi\in\mcG_\mcX$, a mass of $\mu(\xi)$ (resp. $\nu(\xi)$) to its representative point $r_\xi$. This completes the construction of $\mu^{\varepsilon}$ and $\nu^\varepsilon$.
Note that by Lemma~\ref{lemma:relationLevy}, 
\begin{equation}\label{eq:ours-lp-relation1}
    \ours{\infty,1}{\mu^\varepsilon, \nu^\varepsilon} = \lp{\mu^\varepsilon, \nu^\varepsilon}.
\end{equation} 
Furthermore, $W_\infty(\mu, \mu^\varepsilon)\le \frac{\varepsilon}{4}$, since there is a transport plan that transports the mass of $\mu$ inside each cell $\xi\in\mcG$ to the mass of $\mu^\varepsilon$ at $r_\xi$ and each mass transportation has a cost at most $\frac{\varepsilon}{4}$. From Lemma~\ref{lemma:max_alpha_beta}, 
$\ours{\infty, 1}{\mu, \mu^\varepsilon}\le \frac{\varepsilon}{4}$.
Similarly, $\ours{\infty, 1}{\nu, \nu^\varepsilon}\le \frac{\varepsilon}{4}$. Therefore, using the triangle inequality, 
\begin{equation}\label{eq:ours-lp-relation2}
    |\ours{\infty,1}{\mu,\nu}-\ours{\infty,1}{\mu^\varepsilon, \nu^\varepsilon}|\le \ours{\infty,1}{\mu,\mu^\varepsilon}+\ours{\infty,1}{\nu, \nu^\varepsilon}\le  \frac{\varepsilon}{2}.
\end{equation} 
One can also show in a similar way that \begin{equation}\label{eq:ours-lp-relation3}
    |\lp{\mu,\nu}-\lp{\mu^\varepsilon, \nu^\varepsilon}|\le \frac{\varepsilon}{2}.
\end{equation}
We conclude Equation~\eqref{eq:ours-lp-relation0} by combining Equations~\eqref{eq:ours-lp-relation1},~\eqref{eq:ours-lp-relation2}, and~\eqref{eq:ours-lp-relation3}.

\relationTV*
\begin{proof}
    We prove this lemma by first showing that $\ours{p,0}{\mu, \nu}\le \|\mu-\nu\|_{\mathrm{TV}}$ and then showing that $\|\mu-\nu\|_{\mathrm{TV}}\le \ours{p,0}{\mu, \nu}$.
    
    Let $\mcP(\mcX)$ denote the set of all probability distributions defined over the compact set $\mcX$. \citet{nietert2023outlier} showed that one can rewrite the $(1-\varepsilon)$-partial $p$-Wasserstein distance between $\mu$ and $\nu$ as
    \begin{equation}\label{eq:TV-relation-1}
        W_{p,1-\varepsilon}(\mu, \nu)=\inf_{\mu'\in\mcP(\mcX): \|\mu-\mu'\|_{\mathrm{TV}}\le \varepsilon}W_p(\mu', \nu).
    \end{equation}
    Define $\delta=\|\mu-\nu\|_{\mathrm{TV}}$. Plugging $\varepsilon=\delta$ in Equation~\eqref{eq:TV-relation-1},
    \begin{equation}\label{eq:TV-relation-2}
        W_{p,1-\delta}(\mu, \nu)=0.
    \end{equation}
    Therefore, by Lemma~\ref{lemma:max_alpha_beta},
    \begin{equation}\label{eq:TV-relation-5}
    \ours{p,0}{\mu, \nu}\le \max\{0,\delta\}=\delta=\|\mu-\nu\|_{\mathrm{TV}}.
    \end{equation}
    Next, let $\delta'=\ours{p,0}{\mu, \nu}$. By definition of the $(p,0)$-RPW,
    $W_{p,1-\delta'}(\mu, \nu)\le 0\times \delta' = 0$ (since the parameter $k$ is set to $0$), and since the partial $p$-Wasserstein distance is non-negative, $W_{p,1-\delta'}(\mu, \nu) = 0$. Therefore,
    \begin{equation}\label{eq:TV-relation-3}
        0=W_{p,1-\delta'}(\mu, \nu)=\inf_{\mu'\in\mcP(\mcX): \|\mu-\mu'\|_{\mathrm{TV}}\le \delta'}W_p(\mu', \nu).
    \end{equation}
    Let $\mu^*$ be the distribution realizing the infimum in Equation~\eqref{eq:TV-relation-3}. Then, $W_p(\mu^*, \nu)=0$, and by the metric properties of the $p$-Wasserstein distance, $\mu^*=\nu$; hence, 
    \begin{equation}\label{eq:TV-relation-4}
        \|\mu-\nu\|_{\mathrm{TV}}=\|\mu-\mu^*\|_{\mathrm{TV}}\le \delta'=\ours{p,0}{\mu, \nu}.
    \end{equation}
    Combining Equations~\eqref{eq:TV-relation-5} and~\eqref{eq:TV-relation-4}, $\ours{p,0}{\mu, \nu} = \|\mu-\nu\|_{\mathrm{TV}}$.
\end{proof}

\relationWasserstein*
\begin{proof}
    Let $\delta':=W_p(\mu, \nu)$. In this case, 
    \[W_{p, 1-\min\{1, \frac{\delta'}{k}\}}(\mu, \nu)\le W_{p}(\mu, \nu)=k\times\frac{\delta'}{k}.\]
    Therefore, by Lemma~\ref{lemma:max_alpha_beta}, $\ours{p,k}{\mu, \nu}\le \max\{\min\{1, \frac{\delta'}{k}\}, \frac{\delta'}{k}\}=\frac{1}{k}W_p(\mu, \nu)$. 
    
    We next show that $\frac{1}{k}W_p(\mu, \nu)\le \ours{p,k}{\mu, \nu}+k^{-\frac{p+1}{p}}$. Note that the inequality holds trivially for any $k\le 1$, since $k^{-\frac{p+1}{p}}\ge \frac{1}{k}\ge\frac{1}{k}W_p(\mu, \nu)$. We therefore assume that $k> 1$. Let $\delta=\ours{p,k}{\mu, \nu}$.
    Since the $(1-\frac{1}{k})$-partial $p$-Wasserstein distance is at most $1$, by Lemma~\ref{lemma:max_alpha_beta}, $\delta \le \max\{\frac{1}{k}, \frac{1}{k}W_{p, 1-\frac{1}{k}}(\mu, \nu)\}\le\frac{1}{k}$. Let $\gamma$ be a $(1-\delta)$-partial OT plan.
    Since the underlying metric space has a unit diameter, the remaining $\delta$ mass of $\mu$ and $\nu$ with respect to $\gamma$ can be transported at a cost at most $\delta$; therefore,
    \begin{align*}
        W_p(\mu, \nu)\le \big(w_{p}^p(\gamma) + \delta\big)^{1/p}\le \big((k\delta)^p + \frac{1}{k}\big)^{1/p}\le k\delta + k^{-1/p}.
    \end{align*}
    Equivalently, $\frac{1}{k}W_p(\mu, \nu)\le \ours{p,k}{\mu, \nu} + k^{-\frac{p+1}{p}}$.
\end{proof}

\subsection{Missing Details of Section~\ref{sec:approx}.}\label{sec:approx-ap}
In this section, we provide the details of the algorithms mentioned in Section~\ref{sec:approx}.

{\bf Highly-Accurate Algorithm.} In this algorithm, we obtain an approximation of our metric by a simple guessing procedure as follows. Starting from an initial guess $g_1=0.5$ for the value of our metric, at any step $i$ of our algorithm and for any guess value $g_i\ge 0$, define $w_i:=W_{p,1-g_i}(\mu, \nu)$. If $w_i\le kg_i$, then by Lemma~\ref{lemma:max_alpha_beta}, $w_i\le \ours{p,k}{\mu, \nu}\le g_i$, i.e., our guess value is large and we set $g_{i+1} \leftarrow g_i - 2^{-(i+1)}$.
Otherwise, $w_i>kg_i$, and in this case, by Lemma~\ref{lemma:max_alpha_beta}, $g_i\le \ours{p,k}{\mu, \nu}< w_i$, i.e., our guess value is small and we set $g_{i+1} \leftarrow g_i + 2^{-(i+1)}$. Note that at any step $i$, $|g_i-\ours{p,k}{\mu, \nu}|\le 2^{-i}$. Therefore, to obtain a $\delta$-additive approximation of the $(p,k)$-RPW, the algorithm returns the guess value $g_i$ when $2^{-i}\le \delta$. This completes the description of our algorithm.

We next analyze the running time of this algorithm. Computing the $(1-g_i)$-partial $p$-Wasserstein distance can be done using a standard OT solver in an augmented space~\cite{chapel2020partial}, which takes $O(n^3\log n)$ time~\cite{edmonds1972theoretical, orlin1988faster}. The total number of iterations of our algorithm is $O(\log \delta^{-1})$ and therefore, our algorithm runs in $O(n^3\log n\log\delta^{-1})$ time.

{\bf Computing Through an Approximate OT-Profile.} In this part, we show that an approximation of the OT-profile can be used to approximate our metric. We then conclude that one can use the LMR algorithm to obtain such approximations of the OT-profile and to obtain a $\delta$-additive approximation of the RPW distance in $O(\frac{n^{2}}{\delta^p} + \frac{n}{\delta^{2p}})$ time. 

For a $\delta'\in(0,1]$, $p\ge1$, and $\alpha\in[0,1]$, let $\overline{W}_{p,\alpha}(\mu, \nu)$ denote a $\delta'$-close $\alpha$-partial $p$-Wasserstein distance, i.e., $W_{p,\alpha}(\mu, \nu)\le \overline{W}_{p,\alpha}(\mu, \nu)\le W_{p,\alpha}(\mu, \nu) + \delta'$. Define
\[\oursapprox{p,k}{\mu, \nu} = \min\{\varepsilon\ge 0 \mid\overline{W}_{p,1-\varepsilon}(\mu, \nu)\leq k\varepsilon\}\]
to be the $(p,k)$-RPW distance function when computed using the approximate partial $p$-Wasserstein distances. In the following lemma, we show that $\oursapprox{p,k}{\mu, \nu}$ is a $\frac{2\delta'}{k}$-additive approximation of $\ours{p,k}{\mu, \nu}$.

\begin{restatable}{lemma}{approx}\label{lemma:approx}
    For any pair of distributions $\mu$ and $\nu$ in a metric space $(\mcX, c)$ with a unit diameter and any parameters $p\ge1$, $k>0$, and $\delta'>0$, \[\ours{p,k}{\mu, \nu}\le \oursapprox{p,k}{\mu, \nu}\le \ours{p,k}{\mu, \nu} + \frac{2\delta'}{k}.\]
\end{restatable}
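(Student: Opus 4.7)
The plan is to prove the two inequalities separately. Both rely on only two ingredients: the two-sided estimate $W_{p,\alpha}(\mu,\nu) \le \overline{W}_{p,\alpha}(\mu,\nu) \le W_{p,\alpha}(\mu,\nu) + \delta'$ given by hypothesis, and the monotonicity $\alpha' \le \alpha \Rightarrow W_{p,\alpha'}(\mu,\nu) \le W_{p,\alpha}(\mu,\nu)$, which follows because any $\alpha$-partial transport plan restricts to an $\alpha'$-partial sub-plan of no greater cost.

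For the lower bound $\ours{p,k}{\mu,\nu} \le \oursapprox{p,k}{\mu,\nu}$, I would let $\varepsilon^\star := \oursapprox{p,k}{\mu,\nu}$ (which attains its defining minimum by assumption). By definition $\overline{W}_{p, 1-\varepsilon^\star}(\mu,\nu) \le k\varepsilon^\star$, and the lower half of the approximation estimate then gives $W_{p, 1-\varepsilon^\star}(\mu,\nu) \le \overline{W}_{p, 1-\varepsilon^\star}(\mu,\nu) \le k\varepsilon^\star$. Thus $\varepsilon^\star$ belongs to the set whose infimum defines $\ours{p,k}{\mu,\nu}$, yielding the claim immediately.

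For the upper bound, set $\varepsilon_0 := \ours{p,k}{\mu,\nu}$. Because $\varepsilon_0$ is defined via an infimum, fix an arbitrary $\eta > 0$ and choose $\varepsilon \in [\varepsilon_0, \varepsilon_0 + \eta]$ with $W_{p,1-\varepsilon}(\mu,\nu) \le k\varepsilon$. Let $\varepsilon' := \varepsilon + 2\delta'/k$. Chaining the approximation estimate, monotonicity, and the choice of $\varepsilon$ yields
\[
\overline{W}_{p,1-\varepsilon'}(\mu,\nu) \le W_{p,1-\varepsilon'}(\mu,\nu) + \delta' \le W_{p,1-\varepsilon}(\mu,\nu) + \delta' \le k\varepsilon + \delta' \le k\varepsilon + 2\delta' = k\varepsilon',
\]
so $\varepsilon'$ witnesses $\oursapprox{p,k}{\mu,\nu} \le \varepsilon' \le \varepsilon_0 + \eta + 2\delta'/k$. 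Letting $\eta \to 0$ gives the desired $\oursapprox{p,k}{\mu,\nu} \le \ours{p,k}{\mu,\nu} + 2\delta'/k$.

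There is no real obstacle here; the only subtlety is reconciling the $\inf$ in the definition of $\ours{p,k}{\cdot,\cdot}$ with the $\min$ in $\oursapprox{p,k}{\cdot,\cdot}$, handled by the $\eta$-slack argument (continuity of the OT-profile $\alpha \mapsto W_{p,\alpha}(\mu,\nu)$ would also let one attain the infimum directly). The one noteworthy quantitative point is where the factor $2$ in $2\delta'/k$ comes from: one $\delta'$ is spent bounding $\overline{W}_{p,1-\varepsilon'}$ by $W_{p,1-\varepsilon'}+\delta'$, and a second $\delta'$ is needed as headroom so that the right-hand side $k\varepsilon' = k\varepsilon + 2\delta'$ can absorb both the residual $W_{p,1-\varepsilon}(\mu,\nu) \le k\varepsilon$ and the additive error, forcing the slack to be $2\delta'/k$ rather than $\delta'/k$.
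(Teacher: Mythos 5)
Your proof is correct and follows essentially the same route as the paper's: the lower bound is identical, and the upper bound uses the same monotonicity-plus-approximation chain, with your extra $\eta$-slack step cleanly handling the infimum where the paper implicitly relies on continuity of the OT-profile. One point worth correcting, though: your closing explanation of why the factor of $2$ in $2\delta'/k$ is ``forced'' does not hold up. Your own chain establishes $\overline{W}_{p,1-\varepsilon'}(\mu,\nu) \le k\varepsilon + \delta'$, so choosing $\varepsilon' = \varepsilon + \delta'/k$ already gives $k\varepsilon' = k\varepsilon + \delta'$ and closes the argument with only $\delta'/k$ of slack; the final step $k\varepsilon + \delta' \le k\varepsilon + 2\delta'$ is pure give-away. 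The lemma as stated (and the paper's proof, which incurs a similar redundancy by routing back through $\overline{W}_{p,1-\delta}$) is therefore correct but not tight.
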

\begin{proof}
    Let $\overline\delta:=\oursapprox{p,k}{\mu, \nu}$. By definition,    \begin{equation}\label{eq:additive_error_0}
        W_{p,1-\overline\delta}(\mu, \nu) \le \overline{W}_{p,1-\overline\delta}(\mu, \nu) \le k\overline\delta.
    \end{equation}
    Therefore, $\ours{p,k}{\mu, \nu}\le \overline\delta= \oursapprox{p,k}{\mu, \nu}$.
    Next, let $\delta:=\ours{p,k}{\mu, \nu}$. By definition, \begin{equation}\label{eq:additive_error_1}
        \overline{W}_{p,1-\delta}(\mu, \nu) \le W_{p,1-\delta}(\mu, \nu) + \delta' \le k\delta + \delta'.
    \end{equation}
    By properties of the partial $p$-Wasserstein distance,
    \begin{align}
        \overline{W}_{p,1-\delta-\frac{2\delta'}{k}}(\mu, \nu)&\le W_{p,1-\delta-\frac{2\delta'}{k}}(\mu, \nu) + \delta'\le W_{p,1-\delta}(\mu, \nu)+ \delta'\le \overline{W}_{p,1-\delta}(\mu, \nu)+ \delta'.\label{eq:additive_error_2}
    \end{align}
    Combining Equations~\eqref{eq:additive_error_1} and~\eqref{eq:additive_error_2},
    \begin{equation*}
        \overline{W}_{p,1-\delta-\frac{2\delta'}{k}}(\mu, \nu) \le \overline{W}_{p,1-\delta}(\mu, \nu)+\delta'\le k(\delta+\frac{2\delta'}{k}).
    \end{equation*}
    Therefore, $\oursapprox{p,k}{\mu,\nu} \le \delta + \frac{2\delta'}{k}=\ours{p,k}{\mu, \nu} + \frac{2\delta'}{k}$.
\end{proof}

\section{Additional Experiment Results of Section~\ref{sec:experimental}}\label{sec:appendix-exp}
In this section, we present the results of our experiments on the COREL dataset for the task of image retrieval.

\begin{figure}[htp]
\centering
\begin{tabular}{c@{\hskip 0pt}c@{\hskip 0pt}c}
     \includegraphics[width=.32\linewidth]{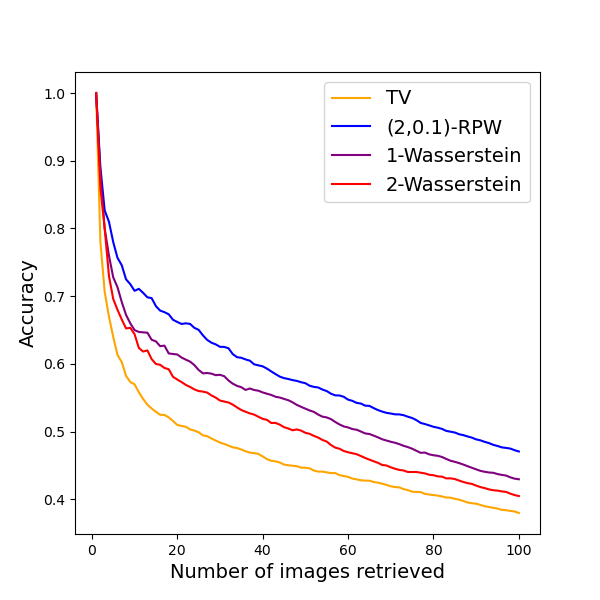} & \includegraphics[width=.32\linewidth]{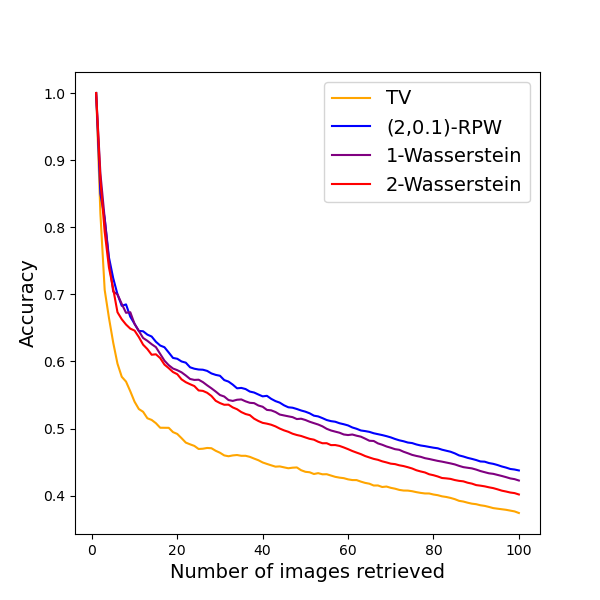} & \includegraphics[width=.32\linewidth]{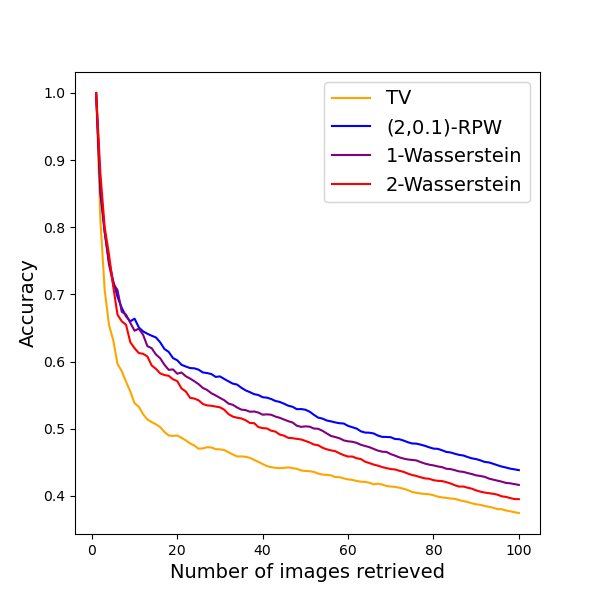}\\
     (i) Noise in datasets & (ii) Shift in datasets & (iii) Noise and shift in datasets
\end{tabular}
\vspace{0.2em}
\vspace{-1em}
\caption{The results of our experiments on image retrieval on the COREL dataset.}
\label{fig:converge_rate_app}
\end{figure}

Similar to the CIFAR-10 dataset, the COREL dataset also consists of color images, where images with the same labels may have significant variations and shifts. Due to these variations, $2$-Wasserstein and TV distances achieve lower accuracy in comparison to the $1$-Wasserstein distance. The $(2,0.1)$-RPW distance, however, outperforms the $1$-Wasserstein distance for this dataset.

\end{document}